\def\Comments{0} 
\title{Learning Neural Networks with Sparse Activations}
\newtheorem{claim}{Claim}[section]
\definecolor{Gred}{RGB}{219, 50, 54}
\definecolor{Ggreen}{RGB}{60, 186, 84}
\definecolor{Gblue}{RGB}{72, 133, 237}
\definecolor{Gyellow}{RGB}{247, 178, 16}
\definecolor{ToCgreen}{RGB}{0, 128, 0}
\definecolor{myGold}{RGB}{231,141,20}
\definecolor{myBlue}{rgb}{0.19,0.41,.65}
\definecolor{myPurple}{RGB}{175,0,124}
\providecommand{\Comments}{1}
\newcommand{\mytodo}[1]{\ifnum\Comments=1{#1}\fi}
\newcommand{\nishanth}[1]{\todo[linecolor=Gblue,backgroundcolor=Gblue!25,bordercolor=Gblue]{Nishanth: #1}}
\newcommand{\pritish}[1]{\todo[linecolor=myGold,backgroundcolor=myGold!25,bordercolor=myGold]{Pritish: #1}}
\newcommand{\tableoftodos}{\ifnum\Comments=1 \listoftodos[Comments/To Do's] \fi}
\newcommand{\eps}{\varepsilon}
\newcommand{\relu}{\sigma} 
\newcommand{\ang}[1]{\left \langle #1 \right \rangle}
\newcommand{\prn}[1]{\left ( #1 \right )}
\newcommand{\sq}[1]{\left [ #1 \right ]}
\newcommand{\floor}[1]{\left \lfloor #1 \right \rfloor}
\newcommand{\Mod}[1]{\ (\mathrm{mod}\ #1)}
\def\ddefloop#1{\ifx\ddefloop#1\else\ddef{#1}\expandafter\ddefloop\fi}
\def\ddef#1{\expandafter\def\csname b#1\endcsname{\ensuremath{\bm #1}}}
\def\ddef#1{\expandafter\def\csname #1\endcsname{\ensuremath{\mathbb{#1}}}}
\DeclareMathOperator*{\E}{\mathop{\mathbb{E}}}
\def\ddef#1{\expandafter\def\csname c#1\endcsname{\ensuremath{\mathcal{#1}}}}
\def\ddef#1{\expandafter\def\csname h#1\endcsname{\ensuremath{\hat{#1}}}}
\newcommand{\AS}{\mathsf{AS}}
\newcommand{\NS}{\mathsf{NS}}
\newcommand{\poly}{\mathsf{poly}}
\newcommand{\sen}{\mathsf{sen}}
\newcommand{\sbit}{\{-1, 1\}}
\newcommand{\ignore}[1]{{}}
\begin{document}

\maketitle

\begin{abstract}
A core component present in many successful neural network architectures, is an MLP block of two fully connected layers with a non-linear activation in between. An intriguing phenomenon observed empirically, including in transformer architectures, is that, after training, the activations in the hidden layer of this MLP block tend to be extremely sparse on any given input. Unlike traditional forms of sparsity, where there are neurons/weights which can be deleted from the network, this form of {\em dynamic} activation sparsity appears to be harder to exploit to get more efficient networks. 

Motivated by this we initiate a formal study of PAC learnability of MLP layers that exhibit activation sparsity. We present a variety of results showing that such classes of functions do lead to provable computational and statistical advantages over their non-sparse counterparts. Our hope is that a better theoretical understanding of {\em sparsely activated} networks would lead to methods that can exploit activation sparsity in practice.
\end{abstract}

\begin{keywords}%
  Multilayer Perceptrons, PAC Learning, Activation Sparsity, Rademacher Complexity%
\end{keywords}

\section{Introduction}\label{sec:intro}

In recent years, transformer based deep neural networks \citep{vaswani2017attention} and the subsequent development of large language models have marked a paradigm shift in the fields of natural language processing and computer vision \citep{brown2020language, chowdhery2022palm, chen2022pali, dosovitskiy2020image}. These models have significantly improved performance across various tasks, setting new benchmarks and enabling previously unattainable breakthroughs. However, the computational cost of training and deploying these models, especially the largest variants, presents a significant challenge. A notable portion of these models' computational and parameter overhead is attributed to the Multi-Layer Perceptron (MLP) layers. These layers are integral to the transformer architecture, playing a crucial role in its ability to solve many different tasks.

Despite their efficacy, the resource-intensive nature of these models has spurred a wave of research focused on enhancing their efficiency \citep{banner2019post, frankle2018lottery, gholami2022survey, hinton2015distilling, anil2018large, harutyunyan2023supervision}. Among the various strategies explored for improving the inference efficiency of large transformers, attempting to sparsify the transformer is a promising approach.


A motivation for exploiting sparsity is rooted in an intriguing empirical observation made in recent works \citep{li2023lazy} regarding the behavior of MLP layers within large transformer models. Post-training, these layers tend to exhibit a high degree of sparsity in their activations; often each input activates as low as 3\% of the neurons in the MLP layers, suggesting a natural emergence of sparsity in activations. This leads to these MLP layers behaving like key-value lookups \citep{geva2020transformer}. The extremely low sparsity (3\%) suggests that there might be significant room to sparsify the MLP layers leading to both training and inference efficiency. In addition, such sparsity also helps with interpretability of transformers by disentangling neurons corresponding to distinct concepts \citep{elhage2022solu}. Moreover, through extensive ablation studies \cite{li2023lazy} observe that this phenomenon is highly prevalent. It occurs in convolutional networks (CNNs), as well as in vanilla fully connected feedforward networks. 

Despite the potential benefits, effectively harnessing dynamic sparsity has proven challenging. Although, there have been many recent efforts \citep{li2023lazy, grimaldi2023accelerating, liu2023dejavu, dong2023towards, csordas2023approximating, mirzadeh2023relu}, they have led to limited success. None of the approaches achieve speedups (either in training or in inference) anywhere close to the the potential factor of 33x that is suggested by 3\% sparsity.
Moreover, by explicitly enforcing sparsity via methods such as choosing only the top-$k$ activations, the quality of the model degrades in some cases. 

A key reason for the hardness in exploiting activation sparsity is that this form of sparsity is {\em dynamic} in nature and is input-dependent (i.e., not a fixed pattern). While each input example activates a small number of neurons, the overall sparsity pattern cannot be localized to a small subset of the model weights. For instance, the dynamic nature precludes the use of typical weight quantization or pruning based methods to exploit sparsity empirically. On the other hand, having a non-localized sparsity pattern is crucial in ensuring the model has rich expressiveness.


The above observations suggest that post-training, large transformer networks belong to an intriguing function class that is highly expressive yet exhibits high sparsity. Given the challenges in exploiting this behavior in practical settings, in this work, we initiate a theoretical study of the statistical and computational properties of such functions in the probably approximately correct (PAC) learning framework \citep{valiant1984theory}.


We introduce the class of {\em sparsely activated} MLPs. We focus on the case of depth-$1$ MLPs with $n$ input units and $s$ hidden units with the standard ReLU activations. We define the class $\cH_{n,s,k}$ as the class of depth-$1$ ReLU networks in $n$-dimensions with the promise that on each input in the support of the data distribution, at most $k$ of the $s$ hidden units are active:

\begin{definition}[Sparsely Activated Networks]
Let $\relu(\cdot)$ denote the $\mathsf{ReLU}$ activation, namely $\relu(z) := \max\{z, 0\}$. The class  $\cH_{n,s,k}$ consists of hypotheses of the form $h(x) = \sum_{j=1}^s u_j \relu(\ang{w_j, x} - b_j)$ with the property that for all $x$ in the support of the distribution, it holds that $|\{ j : \ang{w_j, x} - b_j > 0 \}| \le k$. 
    
\end{definition}

Note that this sparsity differs from {\em dead sparsity}, where some neurons are never active on any of the inputs, and consequently, can be deleted from the network without impacting its functionality. The form of dynamic sparsity we study can be crucial for the networks to be more expressive. We provide a couple of examples of useful functions represented using sparsely activated networks here:
\begin{itemize}[leftmargin=4mm]
    \item \textbf{Junta functions:} The class of functions on $n$ variables which depend on only a $p$-sized subset ($p < n$) of the variables is known as $p$-junta functions. Sparse parities are a canonical example of junta functions. We show in Theorem~\ref{thm:k=1-uniform-lb} that we can represent $\log(s)$-juntas using $\cH_{n,s,1}$.
    \item \textbf{Indexing function:} Consider the function $\mathsf{Index}_b : \{-1, 1\}^{b + 2^b} \to \{0, 1\}$, where $\mathsf{Index}_b(z)$ is the $x$-th bit of $y$ ($-1$ mapped to $0$), where $x$ is the integer represented by the first $b$ bits of $z$ in binary representation, and $y$ is the remaining $2^b$ bits vector. This can be represented as a $1$-sparse activation network of size $2^b$ (i.e., in $\mathcal{H}_{b+2^b, 2^b, 1}$):  $\mathsf{Index}_b((x,y)) = \sum_{\alpha \in \{-1, 1\}^b} \sigma(\langle{w_\alpha, z} \rangle - b + \frac12)$ where the first $b$ coordinates of $w_\alpha$ are $\alpha$ and the $\alpha$-th coordinate among the last $2^b$ coordinates is $\frac12$. On input $z = (x, y)$, only the neuron corresponding to $\alpha = x$ is activated, and the output is precisely $\frac12 y_x + \frac12$. 
\end{itemize}
In both the examples presented above, removing any of the $s$ neurons will change the functionality of the network. However, each weight vector $w_i$ is quite sparse. In Appendix~\ref{apx:san-without-deadsparsity}, we present an example of a sparsely activated network where even the weight vectors $w_i$ are not sparse. Hence, in general, it is not clear if sparsely activated networks can be represented with fewer neurons or sparse weight vectors.

In order to provide learning guarantees, we have to assume an upper bound on the {\em scale} of $u$, $w_j$'s and $b_j$'s. We will use the following natural scaling for the paper: 

\begin{definition}\label{def:inf-norm-bound}
Let $\cH_{n,s,k}^{W, B} \subseteq \cH_{n,s,k}$ consisting of $h$ given as $h(x) = \sum_{j=1}^s u_j \relu(\ang{w_j, x} - b_j)$, satisfying
$\|u\|_\infty \cdot \max_{j\in[s]} \|w_j\|_2 \le W$ and $\|u\|_\infty \cdot \max_{j\in[s]} |b_j| \le B$.
\end{definition}


We then consider the problem of learning sparsely activated networks efficiently. We consider the domain to be the Boolean hypercube $\cX = \{1,-1\}^n$ as a natural first-step and as a domain where sparsely activated networks can compute non-trivial functions.
The Boolean hypercube provides a setting where the function can be sparse everywhere in the domain while maintaining expressiveness; this appears harder in the continuous setting. For instance, if the inputs are Gaussian over $\mathbb{R}^n$, one likely needs the biases in the ReLU units to be very large to enforce $1$-sparsity. 
This suggests that, in the continuous domain, more non-standard distributions are likely necessary to obtain a rich class of functions which are sparse everywhere in the domain. Hence for theoretical simplicity we focus on functions on the Boolean hypercube.

Even with the sparsity assumption, the class $\cH_{n,s,1}$ is likely hard to learn in polynomial time (or even quasi-polynomial time) under an arbitrary distribution on the hypercube. In particular, we show that \emph{parities} on the hypercube on $k$ variables can be computed by $\cH_{k^2,2k,1}$, with coefficient vectors of norm at most $O(k)$. Thus, $\cH_{n,O(\sqrt{n}),1}$ need $2^{\Omega(\sqrt{n})}$ queries in the powerful Statistical Queries (SQ) model (see \Cref{sec:lb-uniform} for details). We also show cryptographic hardness results for learning $\cH_{n,s,1}$ under generic distributions on the hypercube. 

\begin{theorem}[Informal; see \Cref{sec:lb-uniform}]\label{th:lb-general}
    Any SQ algorithm for learning $\cH_{n,O(\sqrt{n}),1}^{O(n^{0.75}), O(n)}$ under arbitrary distributions over the hypercube either requires $2^{-\Omega(\sqrt{n})}$ tolerance or $2^{\Omega(\sqrt{n})}$ queries. 

    Assuming the hardness of \emph{learning with rounding} problem with polynomial modulus, there is no $\poly(n,s,W,B,1/\eps)$ run-time algorithm to $(\eps,\delta)$-PAC learn $\cH_{n,s,1}^{W, B}$. 
\end{theorem}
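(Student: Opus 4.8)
The statement combines two claims, which I would treat separately.

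\textbf{The statistical-query lower bound.} The plan is to invoke the classical SQ lower bound for orthogonal concept classes, applied to a family of parities sitting inside $\cH_{n,O(\sqrt n),1}^{O(n^{0.75}),O(n)}$. By the embedding asserted above (parities on $k$ variables lie in $\cH_{k^2,2k,1}$ with coefficient vectors of norm $O(k)$), taking $k=\Theta(\sqrt n)$ and instantiating Definition~\ref{def:inf-norm-bound} yields a family $\mathcal C$ of $2^{\Omega(\sqrt n)}$ distinct parity functions, all lying in $\cH_{n,O(\sqrt n),1}^{O(n^{0.75}),O(n)}$ with respect to a common distribution $D$ on the hypercube (the uniform distribution on the relevant coordinates, pushed forward along the embedding), with any two members of $\mathcal C$ orthogonal under $D$ because the embedding is injective and the label equals the parity value exactly. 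The statistical-query dimension of such a family equals $|\mathcal C|$, so by the standard SQ lower bound for orthogonal classes (Blum et al.; Feldman) any SQ algorithm attaining constant advantage over random guessing under $D$ must either use tolerance at most $2^{-\Omega(\sqrt n)}$ or make at least $2^{\Omega(\sqrt n)}$ queries; since $D$ is one fixed distribution on the hypercube, this in particular rules out SQ learners required to succeed under all distributions. The details I would still need to pin down are the parameters of the embedding --- that the scale comes out to $W=O(n^{0.75})$, $B=O(n)$, and that all $2^{\Omega(\sqrt n)}$ parities can be placed on a single distribution $D$.

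\textbf{The cryptographic hardness.} I would reduce the (decision) Learning-With-Rounding problem with polynomial modulus $q$ to $(\eps,\delta)$-PAC learning $\cH_{n,s,1}^{W,B}$, via the standard ``a PAC learner yields a distinguisher'' template (Kearns--Valiant; Kharitonov; Klivans--Sherstov). From LWR samples $(a,b)$ with $a$ uniform over $\Z_q^\ell$ and $b=\lfloor\ang{a,\bs}\rceil_p$, apply an efficient, secret-independent transformation producing labeled examples $(x,y)\in\{-1,1\}^n\times\R$ from a fixed distribution $D$ on the hypercube, with $y=g_{\bs}(x)$ for a target $g_{\bs}\in\cH_{n,s,1}^{W,B}$ that is $1$-sparse on $\mathrm{supp}(D)$ and with $n,s,W,B=\poly(\ell,q)$. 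Then run the hypothesized $\poly(n,s,W,B,1/\eps)$-time learner on $m$ such examples, withhold the label of one more example, and output ``LWR'' iff the learner's hypothesis predicts that label. In the true-LWR case the examples are realizable in $\cH_{n,s,1}^{W,B}$, so the learner succeeds and the test passes with probability near $1$; in the uniform-label case a poly-time learner cannot beat chance on the withheld label (else it would itself distinguish LWR from uniform), so the test fails --- contradicting the assumption.

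The heart of the argument, and the step I expect to be the main obstacle, is the transformation above: exhibiting a hard target as a $1$-sparse \emph{depth-$1$} ReLU network. After bit-decomposing $a$ (folding powers of two into an enlarged secret), $\ang{a,\bs}$ becomes an integer-valued linear functional $L_{\bs}$ of a Boolean vector, and the LWR label is a piecewise-affine function of $L_{\bs}$ with $\poly(\ell,q)$ pieces; the difficulty is that ``reduce $L_{\bs}$ modulo $q$, then round'' is a sawtooth- and slab-structured function, whereas $1$-sparsity forces each point of $\mathrm{supp}(D)$ to lie in at most one of $s$ pairwise-disjoint halfspaces, and a single ReLU can realize neither a slab nor a modular wrap. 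The plan is to design $D$ (restricting and augmenting the encoded input) so that auxiliary coordinates of $x$ carry just enough information to localize which residue band $L_{\bs}(x)$ lies in, letting each of the $\poly$ neurons be gated --- by large-coefficient terms on those coordinates, exploiting that the relevant regions then become widely separated --- so that exactly one neuron is active on each point of $\mathrm{supp}(D)$, while simultaneously arguing that these auxiliary coordinates are cryptographically inert (recovering $\bs$ from them is itself at least as hard as LWR) so that the distinguishing step survives. Carrying this out with all scale parameters polynomial and $1$-sparsity holding everywhere on $\mathrm{supp}(D)$ is where the technical work concentrates; the detailed construction is deferred to \Cref{sec:lb-uniform}.
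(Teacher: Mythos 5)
Your SQ half matches the paper: you embed $\Theta(\sqrt n)$-variable parities into $\cH_{n,O(\sqrt n),1}$ via the degree-two lift and invoke the Blum et al.\ SQ lower bound for parities, which is exactly what the paper does; the only work you defer (checking that the lifted weight and bias scales come out to $O(n^{0.75})$ and $O(n)$) is a direct computation the paper carries out, so no issue there.

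The gap is in the cryptographic half. You correctly set up the ``PAC learner $\Rightarrow$ LWR distinguisher'' template, and you correctly identify the crux as realizing $(\ang{a,\bs}\bmod q)\bmod p$ as a $1$-sparsely activated depth-$1$ ReLU network on a hypercube. But you then conclude that ``a single ReLU can realize neither a slab nor a modular wrap'' and propose a substantially different and speculative construction (auxiliary coordinates carrying residue information, together with a ``cryptographically inert'' argument). This misses that the exact trick already used in your own SQ half resolves the difficulty directly. After bit-decomposition, $L_{\bs}(z(y)) := \ang{v(w), z(y)}$ is integer-valued with range $\poly(m,q)$, so one simply enumerates all integers $a$ in that range and uses one ``bump'' ReLU per value:
\[
\lambda\bigl(L_{\bs}(z(y))\bigr) \;=\; \sum_{a} 2\lambda(a)\,\relu\Bigl(\tfrac12 - \bigl(L_{\bs}(z(y)) - a\bigr)^2\Bigr),
\]
where $\lambda(i)=(i\bmod q)\bmod p$. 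Since $L_{\bs}$ takes exactly one integer value on each input, exactly one bump is active, giving $1$-sparsity for free; no slab, no modular wrap, and no auxiliary cryptographically-inert coordinates are needed. The quadratic inside each ReLU is then made affine on the hypercube by the same degree-two lift $x(y)_{ij} = z(y)_i z(y)_j$ you already use for parities, yielding a target in $\cH_{r^2, O(q^2 m), 1}$ with $\|W_{w,a}\|_2, |b_{w,a}| = \poly(q,m)$. You had all the pieces --- the bump-ReLU representation appears verbatim in the parity embedding you cite --- but failed to transfer it, and instead hypothesized an obstruction that does not exist and an alternate construction whose soundness (in particular the inertness claim) is not established.
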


\paragraph{Learning under uniform distribution.}
Given the above hardness results, it is natural to consider distributional assumptions as is often done for related classes in learning theory (e.g., \cite{KlivansOS04,kane14average} etc.). Our main result is that when the input distribution is uniform over the $n$-dimensional hypercube, $\{1,-1\}^n$, the class $\cH^{W,B}_{n,s,k}$ can be learned in time $n^{\poly(k \log (ns))}$:

\begin{theorem}[Informal; see \Cref{thm:generalk-uniform-ub}]
\label{thm:k-uniform-ub}
There exists an $(\eps,\delta)$-PAC learning algorithm for $\cH_{n,s,k}^{W, B}$ with respect to the uniform distribution over $\{1,-1\}^n$ that has sample complexity and run-time $n^{\poly(k \log(ns)) / \eps^2} \log(1/\delta) / \eps$ (suppressing dependence on $W, B$).
\end{theorem}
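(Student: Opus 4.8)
The plan is to show that every $h \in \cH_{n,s,k}^{W,B}$ is $\eps$-approximated in $L_2$ over the uniform distribution on $\{1,-1\}^n$ by a multilinear polynomial of degree $d = \poly(k\log(ns))/\eps^2$ (with the suppressed $W,B$ entering polynomially), and then to run the standard low-degree regression algorithm: estimate the Fourier coefficients $\hat h(S)$ for all $|S| \le d$ from samples and output the degree-$d$ truncation. Since $|h(x)| \le k(\sqrt{n}\,W + B)$ on every $x$ (at most $k$ units are active, each contributing at most $\|u\|_\infty(\|w_j\|_1 + |b_j|)$), the estimation error is under control and the total sample and time complexity is $n^{\poly(k\log(ns))/\eps^2}\log(1/\delta)/\eps$. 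Thus the theorem reduces to bounding the high-degree Fourier weight $W^{>d}[h] := \sum_{|S|>d}\hat h(S)^2$.

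To bound $W^{>d}[h]$ we pass to noise stability. Fixing $\rho := 2^{-1/d}$ and writing $\eta := 1-\rho = \Theta(1/d)$, we have $W^{>d}[h] \le \frac{1}{1-\rho^d}\big(\|h\|_2^2 - \ang{h, T_\rho h}\big) = \E_{(x,y)\sim\rho}\big[(h(x)-h(y))^2\big]$, where $y$ is a $\rho$-correlated copy of $x$ and the last equality uses $\rho^d = \tfrac12$. Write $\ell_j(x) := \ang{w_j,x} - b_j$, $A(x) := \{j : \ell_j(x) > 0\}$, and $\Delta_j := \relu(\ell_j(x)) - \relu(\ell_j(y))$. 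The first use of sparsity is that $\Delta_j = 0$ unless $j \in A(x) \cup A(y)$, a set of size at most $2k$ (using $|A(x)|, |A(y)| \le k$ everywhere, since the data distribution has full support); hence Cauchy--Schwarz over these at most $2k$ nonzero terms gives $(h(x)-h(y))^2 \le 2k\sum_{j=1}^s u_j^2 \Delta_j^2$, and it remains to bound $\E[\Delta_j^2]$ per unit.

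The second use of sparsity is that seldom-active halfspaces contribute little. Since $\relu$ is $1$-Lipschitz, $\Delta_j^2 \le \ang{w_j,x-y}^2$, and $\Delta_j^2$ vanishes unless $\ell_j(x)>0$ or $\ell_j(y)>0$; bounding that event by $\mathbf{1}[\ell_j(x)>0]+\mathbf{1}[\ell_j(y)>0]$, using symmetry, and conditioning on $x$ (so that $\E_{y\mid x}\ang{w_j,x-y}^2 \le 2\eta\|w_j\|_2^2 + \eta^2\ang{w_j,x}^2$) gives $\E[\Delta_j^2] \le 4\eta\|w_j\|_2^2 p_j + 2\eta^2\,\E\big[\ang{w_j,x}^2\,\mathbf{1}[\ang{w_j,x} > b_j]\big]$, where $p_j := \Pr_x[\ell_j(x) > 0]$. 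The truncated second moment is handled by H\"older's inequality together with hypercontractivity of the degree-one form $\ang{w_j,x}$ (i.e.\ $\|\ang{w_j,x}\|_r \le \sqrt{r-1}\,\|w_j\|_2$ for $r \ge 2$): taking $r = \Theta(\log(1/p_j))$ yields $\E\big[\ang{w_j,x}^2 \mathbf{1}[\ang{w_j,x} > b_j]\big] = O\big((1 + \log(1/p_j))\|w_j\|_2^2 p_j\big)$, so $\E[\Delta_j^2] = O\big(\eta(1 + \eta\log(1/p_j))\,\|w_j\|_2^2\, p_j\big)$. Summing over $j$ using $u_j^2\|w_j\|_2^2 \le W^2$, $\sum_j p_j = \E_x|A(x)| \le k$, and the concavity bound $\sum_j p_j\log(1/p_j) \le O(k\log(es/k))$, we obtain $\E_{(x,y)\sim\rho}[(h(x)-h(y))^2] = O(k^2\eta W^2 + k^2\eta^2 W^2\log s)$. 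This is at most $\eps^2$ once $d = 1/\eta = \Theta(k^2 W^2\log(s)/\eps^2) = \poly(k\log(ns))/\eps^2$ (suppressing $W,B$), which establishes the low-degree approximation lemma and hence the theorem.

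The step I expect to be the main obstacle is the per-unit bound $\E[\Delta_j^2] \lesssim \eta\|w_j\|_2^2 p_j$ and, through it, securing only a \emph{polylogarithmic} (not polynomial) dependence on $s$ in the degree. A naive Cauchy--Schwarz estimate $\E\big[\ang{w_j,x-y}^2\mathbf{1}[j\in A(x)\cup A(y)]\big] \le \sqrt{\E\ang{w_j,x-y}^4}\cdot\sqrt{\Pr[\,\cdot\,]} = O(\sqrt{\eta}\,\|w_j\|_2^2)\cdot\sqrt{p_j}$ would force, after summing $\sum_j \sqrt{p_j} \le \sqrt{sk}$, a degree polynomial in $s$. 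One must instead exploit the bias of rarely-active units via the truncated-moment-versus-acceptance-probability inequality above, so that the per-unit contributions aggregate as $\sum_j p_j \le k$ rather than $\sum_j \sqrt{p_j}$. Verifying that hypercontractivity of $\ang{w_j,x}$ suffices for this, and that the perturbation term $\eta^2\ang{w_j,x}^2$ in the conditional second moment is genuinely lower order on the relevant tail event, is the technical heart of the argument. (A preprocessing step discarding units with $p_j$ below a $\poly(n,k,W,B)^{-1}\eps^2/s$ threshold, at a cost of $\eps/2$ in $L_2$, is a convenient alternative way to keep $\log(1/p_j)$ polylogarithmic.)
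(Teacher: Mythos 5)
Your proposal is correct, and it takes a genuinely different route from the paper. The paper first bounds the \emph{average} sensitivity $\AS(h)$ (\Cref{lem:avg-sens-generalk}) by splitting the single-bit-flip difference according to whether the active set $R_x$ changes, bounding the probability of a change via \Cref{lm:ashalfspace}, and controlling the magnitude of $h$ via a tail bound (which is where the dependence on $B$ and the $k^4\log(ns)$ factors enter); it then converts $\AS$ into a bound on $\NS_\rho$ using Peres's bucketing argument (\Cref{lem:as-to-ns-generalk}), which costs a further $\log(nsr)$ factor through hypercontractivity of $\ang{w_j',v}$. You instead bound $\NS_\rho$ directly: you write $h(x)-h(y)=\sum_j u_j\Delta_j$, use sparsity once to note that at most $2k$ terms are nonzero (giving a Cauchy--Schwarz factor $2k$ rather than the paper's indirect $(h(x)^2+h(x^{\oplus i})^2)$ bound that drags in $B$), and then use sparsity a second time via $\sum_j p_j\le k$ to aggregate the per-unit contributions $\E[\Delta_j^2]\lesssim \eta\|w_j\|_2^2 p_j(1+\eta\log(1/p_j))$, where the Lipschitzness of $\relu$ and the conditional second moment of $\ang{w_j,x-y}$ replace the paper's tail/magnitude analysis and the truncated second moment is handled by H\"older plus hypercontractivity of the degree-one form. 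This is a cleaner argument and in fact yields a sharper degree: roughly $k^2W^2\log s/\eps^2$ (and in fact closer to $k^2W^2/\eps$ once you observe you only need $W^{>d}[h]\le\eps$, not $\eps^2$, to invoke \Cref{lem:low-degree-alg}), with no dependence on $B$ in the exponent, versus the paper's $\Theta((k^8W^4\log^4(ns)+k^6B^4\log s)/\eps^2)$. Both establish the informal theorem; the improvement in your bound comes precisely from tracking $h(x)-h(y)$ through the individual $\Delta_j$'s with the ReLU's Lipschitz property rather than bounding $|h|$ itself, which avoids the paper's $kB$ magnitude term and the extra polylog factors from the $\AS\to\NS$ conversion. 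One small point: the preprocessing step you mention at the end is unnecessary, since $p\log(1/p)\to 0$ as $p\to0$ and the concavity bound $\sum_j p_j\log(1/p_j)\le k\log(es/k)$ already handles arbitrarily small $p_j$.
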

As our learning algorithm works by performing linear regression over low-degree monomial basis (a.k.a. the \emph{low-degree algorithm}), the guarantees work even in the \emph{agnostic} or \emph{non-realizable} setting by standard arguments (e.g., \cite{KlivansOS04}). For simplicity, we focus on the realizable setting as the algorithm and analysis do not change for the agnostic case.

For sparsity $k=1$, the above run-time is $n^{O(\poly(\log(ns))/\eps^2)}$. As we showed above, $\cH_{n,s,1}$ can simulate juntas of size $\log_2 s$ over $n$ variables. Thus, a quasi-polynomial run-time is the best we can do under a widely believed conjecture on the hardness of learning juntas.

The guarantee above is in stark contrast to what is achievable for general one-layer size $s$ ReLU networks under the uniform distribution over the hypercube. One-layer size-$s$ networks can simulate parities on $\min(n,s)$ variables. They thus cannot be learned even under the uniform distribution on the hypercube by SQ algorithms with less than $2^{\Omega(\min(n,s))}$ queries. Further, even for non-SQ algorithms, as shown in \citep{ChenGKM22}, quasi-polynomial run-time with respect to the uniform distribution on the hypercube is impossible under widely studied cryptographic assumptions. 

The proof of \Cref{thm:k-uniform-ub} is via Fourier analysis and the \emph{low-degree algorithm}. The main ingredient is to show that the \emph{average-sensitivity} of functions in $\cH_{n,s,k}$ is at most $O(k^4 (\sqrt{n}\log (ns)))$. We then use this bound the \emph{noise-sensitivity} of functions in $\cH_{n,s,k}$. The latter implies the existence of a low-degree approximation by exploiting \cite{KlivansOS04} which is enough to obtain the theorem. See \Cref{sec:ub-uniform} for details.


\paragraph{Learning under general distributions.}
We also show that $\cH_{n,s,k}^{W,B}$ can be learnt under general distributions with smaller sample complexity than would be required without the sparsity condition, in the case when $s \gg kn$. In particular, we show the following.
\begin{theorem}[Informal; see \Cref{thm:general-dist-upper-bound}]
There exists an $(\eps,\delta)$-PAC learning algorithm for $\cH_{n,s,k}^{W, B}$ over $\{1,-1\}^n$ that has sample complexity $\widetilde{O}\left(ksn/\eps^2\right)$ (suppressing dependence on $W, B, \delta$).
\end{theorem}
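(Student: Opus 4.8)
The plan is to establish the bound by a uniform‑convergence (Rademacher complexity) argument, the point being that the $k$-sparsity of the activations lets one replace the ``number of neurons'' factor $s$ in the generic bound for sums of $s$ ReLUs by a factor of order $\sqrt{ks}$. Concretely, since the problem is realizable, it suffices to show that the empirical Rademacher complexity of $\cH_{n,s,k}^{W,B}$ on any sample $S=(x_1,\dots,x_m)$ drawn from the hypercube distribution satisfies $\widehat{\mathfrak{R}}_S(\cH_{n,s,k}^{W,B}) = \widetilde{O}\big((W\sqrt n + B)\,\sqrt{ks/m}\big)$; feeding this into the standard Rademacher‑to‑generalization bound (applied, if one wishes, to the class composed with a bounded Lipschitz loss) and solving for $m$ then yields sample complexity $\widetilde{O}(ksn/\eps^2)$ after suppressing the dependence on $W,B,\delta$ (the $n$ coming from $(W\sqrt n)^2$).

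First I would normalize: writing $\epsilon_j := \sign(u_j)\in\sbit$, $v_j := |u_j|\,w_j$, and $c_j := |u_j|\,b_j$, every $h \in \cH_{n,s,k}^{W,B}$ takes the form $h(x) = \sum_{j=1}^s \epsilon_j\,\relu(\ang{v_j,x} - c_j)$ with $\max_j\|v_j\|_2 \le W$, $\max_j |c_j| \le B$, and the $k$-sparsity promise untouched. Introducing Rademacher variables $\xi_1,\dots,\xi_m$, taking the supremum over $\epsilon\in\sbit^s$ (which turns $\sum_j \epsilon_j(\cdot)$ into $\sum_j|\cdot|$), and using $\|x_i\|_2 = \sqrt n$, one gets
\[
m\cdot\widehat{\mathfrak{R}}_S(\cH_{n,s,k}^{W,B}) \;\le\; \E_\xi \sup_{\|v_j\|_2\le W,\,|c_j|\le B}\ \sum_{j=1}^s \Big| \sum_{i=1}^m \xi_i\,\relu(\ang{v_j,x_i} - c_j)\Big|,
\]
subject to the constraint that the chosen $(v_j,c_j)$'s be $k$-sparsely activated on $x_1,\dots,x_m$. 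Now the decisive use of sparsity: for any admissible choice, let $B_j := \{\,i \in [m]: \ang{v_j,x_i} - c_j > 0\,\}$ be the set of indices on which neuron $j$ fires; the promise says precisely that $\sum_{j} \mathbf 1[i\in B_j]\le k$ for every $i$, so $\sum_{j=1}^s |B_j| \le km$. Conditioning on the ``activation pattern'' $(B_j)_j$ and bounding, for each $j$, the single‑neuron term $\E_\xi \sup |\sum_{i\in B_j}\xi_i\,\relu(\ang{v_j,x_i}-c_j)|$ by the Ledoux--Talagrand contraction inequality over the subsample indexed by $B_j$ (the ReLU is $1$-Lipschitz and vanishes at $0$), followed by the usual linear‑class estimate, this term is at most $2(W\sqrt n + B)\sqrt{|B_j|}$; summing over $j$ and applying Cauchy--Schwarz together with $\sum_j |B_j|\le km$ gives, for \emph{every fixed} pattern, the bound $2(W\sqrt n + B)\sqrt{ks/m}$ --- already the target order.

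It then remains to take the supremum over activation patterns. There are at most $m^{O(sn)}$ of them --- each neuron is a halfspace over $\{-1,1\}^n$, of VC dimension $O(n)$, so Sauer--Shelah bounds the number of joint firing patterns on $m$ points --- and the pattern‑conditional quantity concentrates sharply around its mean: flipping a single $\xi_i$ changes it by only $O\big((W\sqrt n+B)\,k/m\big)$, again because at most $k$ neurons fire on $x_i$, so a bounded‑differences/maximal inequality lets one pass to the supremum at the cost of only lower‑order factors. I expect \emph{this last step to be the main obstacle}: any attempt to decouple the $s$ neurons up front (e.g.\ bounding a supremum of a sum by a sum of suprema) destroys the sparsity promise --- taken one at a time, a neuron may fire on every sample point --- and collapses back to the non‑sparse $\widetilde O(s)$ bound, so sparsity has to be exploited jointly, which forces one to control the exponentially many activation patterns; doing so without giving back the $\sqrt{ks}$ saving requires the bounded‑difference estimate above (or, for the sharpest constants, a chaining argument over pattern space rather than a crude union bound). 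Everything else --- the normalization, the contraction step, the Cauchy--Schwarz bookkeeping, and the conversion from a Rademacher bound to a sample‑complexity bound --- is routine.
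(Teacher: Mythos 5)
Your high-level structure matches the paper's: both argue via Rademacher complexity, both decompose by the activation pattern $(I_j)_{j\in[s]}$ (your $(B_j)_j$), both use the $k$-sparsity identity $\sum_j|I_j|\le km$ together with Cauchy--Schwarz to get $\sum_j\sqrt{|I_j|}\le\sqrt{ksm}$, and both invoke Sauer--Shelah to count halfspace dichotomies. You correctly identify the supremum over patterns as the crux, but the fix you sketch --- condition on the joint pattern, bound the conditional mean, then concentrate via bounded differences with a union bound over all $m^{O(sn)}$ joint patterns --- would not recover the claimed rate. Your bounded-differences constant is $\Theta\big(k(W\sqrt n+B)/m\big)$ (since at most $k$ neurons fire on any flipped $x_i$), giving a McDiarmid scale of $k(W\sqrt n + B)/\sqrt m$; after multiplying by $\sqrt{\log(\#\text{patterns})}=\Theta(\sqrt{sn\log m})$, the deviation term is $\Theta\big(k(W\sqrt n+B)\sqrt{sn\log m/m}\big)$. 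This dominates your conditional mean $(W\sqrt n+B)\sqrt{ks/m}$ by a factor of $\sqrt{kn\log m}$, and exceeds the paper's bound $(W\sqrt n+B)\sqrt{snk\log m/m}$ by a factor of $\sqrt k$ --- the excess comes precisely from paying for $k$ in the bounded-differences constant \emph{and} for the union bound, rather than once.

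The paper sidesteps this by never conditioning on the joint pattern. Since the ReLU is affine on its active set, the Rademacher sum is bounded by $\frac1m\sum_{j=1}^s\big(W\big\|\sum_{i\in I_j}\zeta_i x_i\big\| + B\big|\sum_{i\in I_j}\zeta_i\big|\big)$, and the two single-subset random quantities are controlled by $tR\sqrt{|I|}$ and $t\sqrt{|I|}$ \emph{uniformly over all $O(m^n)$ halfspace dichotomies $I$} with $t=\Theta(\sqrt{n\log m})$; only then does Cauchy--Schwarz convert $\sum_j\sqrt{|I_j|}$ into $\sqrt{ksm}$. The point is that the union bound is over the $m^n$ single-neuron dichotomies, not the $m^{sn}$ joint $s$-tuples: the $\sqrt s$ and $\sqrt k$ enter through Cauchy--Schwarz, which is deterministic, while your approach pays for them again in the concentration step. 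Your remark that a chaining argument over pattern space might help is plausible but unnecessary --- the paper's simpler uniform bound on per-neuron subset sums already gets the $\sqrt{ksn}$ rate. One small additional note: once you fix the pattern $B_j$, the ReLU is affine on $B_j$ and zero off it, so no contraction inequality (and no factor of $2$) is needed in bounding the single-neuron term.
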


By contrast, the class $\cH_{n,s,s}^{W,B}$ (that is, size-$s$ networks without activation sparsity) requires a sample complexity of $\Omega(s^2 / \eps^2)$.\pritish{Is this right?}
To prove the above, we provide a bound on the Rademacher complexity of the class $\cH_{n,s,k}^{W,B}$ that has an improved dependence on $s$.

Taken together, our results demonstrate that leveraging dynamic activation sparsity is theoretically possible for both computational and statistical benefits. We hope that further theoretical study of the class of sparsely activated networks could pave the way for more efficient training and inference methods for deep architectures, including transformer-based models where these sparsely activated networks have been observed to arise in practice.

\subsection{Related Work}

Our work is motivated by recent empirical observations on the extreme sparsity observed in the MLP layers of trained transformer models \citep{li2023lazy, shen2023study}. The works of \citet{li2023lazy, peng2023theoretical} propose theoretical explanations of why this phenomenon occurs. However, ours is the first work to formally study sparsely activated networks in the PAC learning setup and quantify their computational and statistical advantages. Motivated by the observation on sparsity, recent work has also studied the connections between the MLP layers and key-value memory lookups \citep{sukhbaatar2019augmenting, lample2019large, geva2020transformer}. 

There have also been recent works on designing networks with explicitly enforced sparsity structure. One such line of work concerns mixture of experts models \citep{shazeer2017outrageously, fedus2022switch} where each input is independently routed to one or two MLP blocks among a set of experts. An alternate way to enforce sparsity is to introduce a top-$k$ operation after each MLP layer that zeros out most of the activations \citep{csordas2023approximating, li2023lazy}. In particular, \citet{li2023lazy} propose a top-$k$ transformer along these lines. However, due to the top-$k$ operation being relatively slow on accelerator hardware, this technique does not yield wall-clock speedup for either training or inference. 

In another recent work \citet{liu2023dejavu} propose to train a small predictor network to predict the activated indices at each MLP layer. There has also been work to explore enforcing block sparsity constraints and weight tying in the model weights themselves \citep{dong2023towards}, as well as efforts to enforce static sparsity that is not input dependent \citep{frantar2023sparsegpt}. However such methods haven't been effective for language modeling via transformer models and have been much more successful in classification domains that have a small number of output labels.

 Significantly more attention has been given to sparsifying attention layer computation \citep{zaheer2020big, choromanski2020rethinking, wang2020linformer, gu2023mamba}. 
 Instead, our focus in this work here is understanding the sparsity behavior of the MLP layer.

\section{Preliminaries}\label{sec:prelims}

We consider the problem of learning real-valued functions over the input space $\cX = \sbit^n$, to small expected $\ell_2$-squared error, namely for the underlying distribution $\cD$ over $(x, y) \in \cX \times \R$, our goal is the minimize the population loss of a predictor $f : \cX \to \R$ given as $\cL_{\cD}(f) := \E_{(x,y) \sim \cD} \ell(f(x), y)$ where $\ell(\hy, y) := \frac12 (\hy - y)^2$. For any dataset $S \in (\cX \times \R)^*$, we denote the empirical loss as $\cL_S(f) := \frac{1}{|S|} \sum_{(x,y) \in S} \ell(f(x), y)$.

For any hypothesis class $\cH \subseteq \R^\cX$, we say that $\cD$ is $\cH$-realizable, if there exists $h^{\star} \in \cH$ such that $h^{\star}(x) = y$ holds with probability $1$ for $(x, y) \sim \cD$. Following the standard definition of {\em probably approximately correct} (PAC) learning \citep{valiant1984theory}, we say that a learning algorithm $\cA$ $(\eps, \delta)$-PAC learns $\cH$ with sample complexity $m(\eps, \delta)$ if for all $\cH$-realizable distributions $\cD$ over $\cX \times \R$, and for $S \sim \cD^{m(\eps, \delta)}$, it holds with probability at least $1-\delta$
that $\cL_\cD(\cA(S)) \le \eps$. We say that a learning algorithm $\cA$ $(\eps, \delta)$-PAC learns $\cH$ under distribution $\cP$ (over $\cX$) if the learning guarantee holds for all $\cH$-realizable $\cD$ with the marginal over $\cX$ being $\cP$. In particular, we use $\cU$ to denote the uniform distribution over $\cX$.

%

\subsection{Fourier Analysis and the Low-Degree Algorithm}\label{subsec:fourier}

Any function $f : \sbit^n \to \R$, has a unique Fourier representation given as $\sum_{T \subseteq [n]} \hat{f}(T) \chi_T(x)$ where $\chi_T(x) := \prod_{j \in T} x_i$.
The degree of $f$, denoted $\deg(f)$, is the largest $k$ such that $\hf(T) \ne 0$ for some $T$ with $|T| = k$.
The $\ell_2$ norm of $f$ under the uniform distribution is defined as $\|f\|_2^2 := \E_{x \sim \cU} f(x)^2$ \citep{o2014analysis}.

We define the $\ell_2$ sensitivity of $f$ at $x$ as $\sen_f(x) := \frac14\sum_{i \in [n]} (f(x) - f(x^{\oplus i}))^2$, where $x^{\oplus i}$ is $x$ with the $i$-th bit flipped; the scaling factor of $1/4$ means that for $f : \sbit^n \to \sbit$, sensitivity can be interpreted as $\sen_f(x) = |\{ i : f(x) \ne f(x^{\oplus i})\}|$.
The average $\ell_2^2$ sensitivity $\AS(f)$ is defined as $\E_{x \sim \cU} \left[ \sen_f(x) \right]$.
For any $x$, let $N_{\rho}(x)$ denote the distribution obtained by flipping each coordinate of $x$ with probability $(1-\rho)/2$.
The $\rho$-noise sensitivity of $f$ is $\NS_\rho(f) := \E_{x \sim \cU, y \sim N_{\rho}(x)} \frac14 (f(x) - f(y))^2$.

A connection between noise sensitivity and Fourier concentration was first observed in \cite{KlivansOS04}. We state this connection below, along with other basic facts about Fourier coefficients.

\begin{claim}\label{fact:fourier}[See \cite{KlivansOS04}]
The following properties hold for all $f : \sbit^n \to \R$:
\begin{itemize}[leftmargin=5mm,itemsep=1pt]
\item $\|f\|_2^2 = \sum_{T \subseteq [n]} \hf(T)^2$, and
\item $\NS_\rho(f) = \sum_{T \subseteq [n]} \frac12 (1 - \rho^{|T|}) \hf(T)^2$, and hence $\sum_{T : |T| > d} \hf(T)^2 \le 2 \cdot \NS_\rho(f) / (1 - \rho^d)$.
\end{itemize}
\end{claim}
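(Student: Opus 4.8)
The plan is to derive all three items from the orthonormality of the characters $\{\chi_T\}_{T\subseteq[n]}$ under the uniform measure $\cU$, so the whole argument is bookkeeping with linearity of expectation (this is the standard Fourier calculation; see \cite{KlivansOS04} or the textbook \cite{o2014analysis}). The two elementary facts I would start from are $\E_{x\sim\cU}[\chi_U(x)] = \mathbf{1}[U=\emptyset]$ (immediate from $\E_{x_i}[x_i]=0$ coordinatewise, since the $x_i$ are independent and uniform on $\sbit$) and $\chi_T(x)\,\chi_S(x) = \chi_{T\triangle S}(x)$ (using $x_i^2=1$ on the hypercube), which together give $\E_x[\chi_T(x)\chi_S(x)] = \mathbf{1}[T=S]$. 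For the first bullet I would then substitute the Fourier expansion of $f$ into $\|f\|_2^2 = \E_{x\sim\cU}[f(x)^2] = \sum_{T,S}\hf(T)\hf(S)\,\E_x[\chi_T(x)\chi_S(x)]$; by this orthonormality relation the double sum collapses to the diagonal, yielding $\|f\|_2^2 = \sum_T \hf(T)^2$.

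For the second bullet I would expand $\NS_\rho(f) = \tfrac14\,\E_{x,\,y\sim N_\rho(x)}\big[(f(x)-f(y))^2\big] = \tfrac14\big(\E[f(x)^2] - 2\,\E[f(x)f(y)] + \E[f(y)^2]\big)$. The key step is to realize $y\sim N_\rho(x)$ as $y_i = x_i z_i$, where the $z_i\in\sbit$ are i.i.d., independent of $x$, with $\E[z_i]=\rho$ (this matches $N_\rho$, since each coordinate flips with probability $(1-\rho)/2$). Then $y$ has marginal $\cU$, so $\E[f(x)^2]=\E[f(y)^2]=\sum_T\hf(T)^2$ by the Parseval identity just proved; and since $\chi_S(y)=\chi_S(x)\,\chi_S(z)$, independence factors the cross term as $\E[\chi_T(x)\chi_S(y)] = \E_x[\chi_{T\triangle S}(x)]\cdot\E_z[\chi_S(z)] = \mathbf{1}[T=S]\cdot\rho^{|T|}$, hence $\E[f(x)f(y)] = \sum_T \rho^{|T|}\hf(T)^2$. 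Substituting the three pieces gives $\NS_\rho(f) = \tfrac12\sum_T(1-\rho^{|T|})\hf(T)^2$.

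For the ``hence'' part I would observe that for $\rho\in(0,1)$ and $d\ge 1$ the map $t\mapsto\rho^t$ is decreasing, so every $T$ with $|T|>d$ satisfies $1-\rho^{|T|}\ge 1-\rho^d\ge 0$, while the terms with $|T|\le d$ are also nonnegative; dropping the latter from the sum and applying the inequality to the former gives $\NS_\rho(f)\ge\tfrac12(1-\rho^d)\sum_{|T|>d}\hf(T)^2$, and rearranging yields $\sum_{|T|>d}\hf(T)^2\le 2\,\NS_\rho(f)/(1-\rho^d)$. I do not anticipate any genuine obstacle here, as this is a classical fact; the only step that needs a moment of care is the representation $y = x\cdot z$ with $z$ independent of $x$, which is precisely what lets the cross term factor into a character-average over $x$ times one over $z$ — everything else reduces to linearity of expectation and the orthonormality relations.
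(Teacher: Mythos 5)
Your proposal is correct and is precisely the standard Fourier-analytic derivation; the paper itself does not prove this claim but simply cites \cite{KlivansOS04}, where essentially this same argument (Parseval via orthonormality of characters, the noise operator realized as $y_i = x_i z_i$ with $\E[z_i]=\rho$, and the monotonicity of $t\mapsto 1-\rho^t$ for the tail bound) appears. One cosmetic slip: you refer to ``all three items,'' but the claim has two bullets (the second containing two assertions); this does not affect the substance.
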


We also need a bound on the average sensitivity of a single halfspace which is known to be $O(\sqrt{n})$. We require a more fine-grained version from \cite{kane14average} which quantifies the dependence on the bias of the halfspace.  

\begin{lemma}[\cite{kane14average}]\label{lm:ashalfspace}
    Let $g:\cX \rightarrow \{0,1\}$ be a halfspace: $g(x) = \mathds{1}\{\ang{w, x} \le b\}$ and $\E[g] = p$. Then, $\AS(g) = O(p \sqrt{n \log(1/p)})$.
\end{lemma}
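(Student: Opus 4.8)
\emph{Proof idea.} The plan is to control the total influence $\sum_i \mathrm{Inf}_i(g)$, where $\mathrm{Inf}_i(g):=\Pr_x[g(x)\ne g(x^{\oplus i})]$, since $\AS(g)=\tfrac14\sum_i\mathrm{Inf}_i(g)$ for a $\{0,1\}$-valued $g$; I will split the coordinates into a few ``heavy'' ones and the rest ``light'', bound the heavy ones crudely, and reduce the light ones to an anticoncentration estimate for $\ang{w,x}$ near its $p$-quantile. First I normalize: rescaling $(w,b)$ leaves $g$ unchanged, so assume $\|w\|_2=1$; flipping input coordinates lets me assume $w_i\ge 0$; replacing $g$ by $1-g$ (again a halfspace, with the same $\AS$) lets me assume $p\le\tfrac12$; and $g\not\equiv 0$, so $p\ge 2^{-n}$ and hence $\ln(1/p)\le n$. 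A short case analysis on the value of $x_i$ shows that $i$ is pivotal at $x$ precisely when $S_i:=\sum_{j\ne i}w_jx_j$ lies in $(b-w_i,\,b+w_i]$, so $\mathrm{Inf}_i(g)=\Pr_x[S_i\in(b-w_i,b+w_i]]$. Let $t:=\Phi^{-1}(1-p)\ge 0$ (so $t=\Theta(\sqrt{\ln(1/p)})$ once $p$ is below a constant), set $\tau:=(1+\ln(1/p))^{-1/2}=\Theta(1/(1+t))$, and put $H:=\{i:w_i>\tau\}$.

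\emph{Heavy coordinates.} Since $\sum_i w_i^2=1$, we have $|H|\le\tau^{-2}=1+\ln(1/p)$. For $i\in H$, conditioning on $x_i\in\{\pm1\}$ gives $p=\Pr[w_ix_i+S_i\le b]\ge\tfrac12\Pr[S_i\le b+w_i]\ge\tfrac12\mathrm{Inf}_i(g)$, so $\mathrm{Inf}_i(g)\le 2p$. Hence $\sum_{i\in H}\mathrm{Inf}_i(g)\le 2p(1+\ln(1/p))=O(p\sqrt{n\ln(1/p)})$, using $\ln(1/p)\le n$.

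\emph{Light coordinates.} For $i\notin H$ we have $w_i\le\tau$, and since $S_i=\ang{w,x}-w_ix_i$, the event $S_i\in(b-w_i,b+w_i]$ forces $Z:=\ang{w,x}$ into an interval of width $4w_i\le 4\tau=O(1/t)$ around $b$. The estimate the whole argument rests on is: $Z$ (which satisfies $\Pr[Z\le b]=p$) places mass at most $O(\ell\cdot p\sqrt{\ln(1/p)})$ on any subinterval of length $\ell$ contained in $[b-O(1/t),\,b+O(1/t)]$. Heuristically this holds because $Z\approx\cN(0,1)$, the Gaussian density at $b\approx-t$ is $\phi(t)\le 2tp=O(p\sqrt{\ln(1/p)})$, and this density changes by only a constant factor over a window of width $O(1/t)$ about $-t$. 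Granting the estimate, $\mathrm{Inf}_i(g)=O(w_i\,p\sqrt{\ln(1/p)})$, and Cauchy--Schwarz ($\sum_i w_i\le\sqrt n\,\|w\|_2=\sqrt n$) gives $\sum_{i\notin H}\mathrm{Inf}_i(g)=O(p\sqrt{n\ln(1/p)})$. Adding the two parts yields $\AS(g)=O(p\sqrt{n\ln(1/p)})$. (When $p$ exceeds a fixed constant, $t=\Theta(1)$ and the estimate above is vacuous, but then the classical bound $\AS(g)=O(\sqrt n)$ already matches $p\sqrt{n\ln(1/p)}=\Theta(\sqrt n)$.)

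\emph{The hard part.} Everything here is elementary except the anticoncentration estimate for $Z$ near its $p$-quantile, and this is the real content of \cite{kane14average}. Plain Berry--Esseen does not suffice: its additive error $O(\max_j w_j)$ neither shrinks with the window width nor with $p$, so summing it over the $\Theta(n)$ light coordinates overwhelms the bound, and this already breaks when $w$ has a single dominant coordinate. Getting the right bound needs a tail-sensitive local estimate (non-uniform Berry--Esseen together with a local CLT), or conditioning away the $O(\ln(1/p))$ largest coordinates, or a transfer from the Gaussian setting, where the quantity is exactly the halfspace's Gaussian surface area $\phi(\Phi^{-1}(p))=O(p\sqrt{\ln(1/p)})$, via an invariance principle. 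An alternative organization routes through Fourier analysis: by \Cref{fact:fourier} and the layer-cake identity $\AS(g)=\sum_{d\ge0}\sum_{|T|>d}\hg(T)^2$, the lemma reduces (optimizing $\rho$ for each $d$) to the bias-aware bound $\NS_\rho(g)=O(p\sqrt{(1-\rho)\ln(1/p)})$ for halfspaces --- a refinement of Peres's theorem --- which packages the same anticoncentration difficulty into a single clean inequality.
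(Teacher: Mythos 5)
Your proposal is correct in outline but takes a genuinely different route from the paper, and both ultimately delegate the hard technical step to \cite{kane14average}. The paper's proof is a two-line reduction: after flipping signs so that $g$ is monotone in each coordinate, the identity $\mathrm{Inf}_i(g) = 2\,|\E[x_i g(x)]|$ for a monotone Boolean function collapses the average sensitivity to a constant multiple of $|\E_x[g(x)\sum_i x_i]|$, which is then bounded by citing Lemma~6 of \cite{kane14average} as a black box. You instead normalize $\|w\|_2=1$, rewrite each influence as the anticoncentration probability $\Pr[S_i\in(b-w_i,b+w_i]]$, split coordinates into the $O(\log(1/p))$ heavy ones (each bounded crudely by $2p$) and the light ones, and reduce the light contribution via Cauchy--Schwarz ($\sum_i w_i\le\sqrt n$) to a local anticoncentration estimate for $\ang{w,x}$ near its $p$-quantile. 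You correctly and candidly flag that this local estimate is the real content of Kane's work, diagnose why plain Berry--Esseen is insufficient, and sketch the Gaussian/invariance-principle heuristic and a noise-sensitivity reformulation. So the decomposition is yours, the citation load is the same: the paper's monotonicity trick is more economical, while your heavy/light argument makes visible what must actually be proved. (Minor aside: the paper's proof as written has a cosmetic sign slip---with $w_i\ge 0$ and $g=\mathds{1}\{\ang{w,x}\le b\}$, $g$ is non-\emph{increasing}, so $\E[x_i g(x)]$ is nonpositive---but this does not affect the bound.)
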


\begin{proof}
Without loss of generality, we can assume that the coefficients of $w$ are positive. This makes $g$ a monotone function which is non-decreasing in each coordinate. Now, for $i \in [n]$, and $x \sim \cU$,
    $$\E[x_i g(x)] = \frac{1}{2}\sum_{x \in \cX}\left(x_ig(x) - x_i g(x^{\oplus i}) \right) = \E[(g(x) - g(x^{\oplus i}))^2],$$
where the second equality is due to the non-decreasing nature of $g$ and that $g(x)$ takes values in $\{0, 1\}$.
Therefore,
$$\textstyle \AS(g) = \frac{1}{4} \E_x\sq{g(x) \sum_{i=1}^n x_i},$$
the claim now follows from Lemma 6 of \cite{kane14average}.
\end{proof}

\paragraph{Low-degree algorithm.} We recall the standard {\em low-degree} algorithm and its guarantees for learning hypothesis classes that exhibit low-degree Fourier concentration (see e.g., \cite{KlivansOS04} for details).
For any hypothesis class $\cH \subseteq (\cX \to \R)$, let $C_{\cH} := \sup_{h \in \cH, x \in \cX} h(x)$.
\begin{lemma}\label{lem:low-degree-alg}
For hypothesis class $\cH \subseteq (\cX \to \R)$ such that $\sum_{T: |T| > d} \hh(T)^2 \le \eps$ for all $h \in \cH$, there exists an $(O(\eps), \delta)$-PAC learning algorithm for $\cH$ with $O(n^d C_{\cH}^2 \log(1/\delta)/\eps)$ sample and time complexity.
\end{lemma}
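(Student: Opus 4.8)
The plan is to analyze the standard \emph{low-degree algorithm}: draw a sample $S$ of size $m$, form the empirical estimates $\tilde h(T) := \frac1m \sum_{(x,y)\in S} y\,\chi_T(x)$ for every $T\subseteq[n]$ with $|T|\le d$ (there are $O(n^d)$ such sets), and output $g(x) := \sum_{|T|\le d} \tilde h(T)\,\chi_T(x)$ (optionally clipped to the range $[-C_\cH,C_\cH]$, which can only decrease the distance to a target of magnitude $\le C_\cH$, though clipping is not needed for the $\ell_2$ bound). Throughout I take the marginal of $\cD$ over $\cX$ to be uniform, since that is what makes the Fourier notions in the hypothesis meaningful; in the realizable case $y = h^\star(x)$ almost surely, so $\hat{h^\star}(T) = \E_{x\sim\cU}[h^\star(x)\chi_T(x)] = \E_{(x,y)\sim\cD}[y\,\chi_T(x)]$, i.e.\ each $\tilde h(T)$ is an unbiased estimate of $\hat{h^\star}(T)$.

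First I would decompose the error using Parseval (\Cref{fact:fourier}) together with the orthogonality of characters of distinct degrees, which makes the estimation and approximation errors \emph{add} rather than requiring a lossy triangle inequality:
\[
\|g - h^\star\|_2^2 \;=\; \underbrace{\sum_{|T|\le d}\bigl(\tilde h(T) - \hat{h^\star}(T)\bigr)^2}_{\text{estimation error}} \;+\; \underbrace{\sum_{|T|>d}\hat{h^\star}(T)^2}_{\le\,\eps\ \text{by hypothesis}}.
\]
Next, for the estimation error: each $\tilde h(T)$ is an average of $m$ i.i.d.\ random variables $y\,\chi_T(x)$ of magnitude at most $C_\cH$ (reading $C_\cH$ as an $\ell_\infty$ bound on the class) and mean $\hat{h^\star}(T)$, so by Hoeffding's inequality and a union bound over the $O(n^d)$ relevant sets $T$, with probability at least $1-\delta$ we have $\max_{|T|\le d}\bigl|\tilde h(T)-\hat{h^\star}(T)\bigr| \le C_\cH\sqrt{2\ln(O(n^d)/\delta)/m}$, whence the estimation error is at most $O\!\bigl(n^d C_\cH^2\log(n^d/\delta)/m\bigr)$. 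Choosing $m = O\!\bigl(n^d C_\cH^2\log(1/\delta)/\eps\bigr)$, folding the mild $d\log n$ factor from the union bound into the bound, makes this at most $\eps$. Combining, $\|g-h^\star\|_2^2 \le 2\eps$ with probability $\ge 1-\delta$, and since $\cL_\cD(g) = \tfrac12\E_{x\sim\cU}[(g(x)-h^\star(x))^2] = \tfrac12\|g-h^\star\|_2^2$ in the realizable case, we get $\cL_\cD(g) = O(\eps)$. Computing all $\tilde h(T)$ and representing/evaluating $g$ takes time polynomial in $m$ and $n^d$, matching the stated budget up to polynomial factors.

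There is no real obstacle here: this is the textbook analysis of the low-degree algorithm from \cite{KlivansOS04}, and the only points needing care are (i) using distinct-degree orthogonality so the two error terms add exactly; (ii) the union bound over the $O(n^d)$ low-degree Fourier coefficients, which contributes a $d\log n$ factor that I am absorbing into the claimed $n^d\log(1/\delta)/\eps$ sample bound; and (iii) the implicit assumption that the marginal is uniform, ensuring $\tilde h(T)$ is unbiased for $\hat{h^\star}(T)$. If one prefers to avoid the union bound, an alternative route to the $\log(1/\delta)$ confidence is to bound $\E[\text{estimation error}]\le O(n^d C_\cH^2/m)$ directly, apply Markov to obtain $\cL_\cD(g)=O(\eps)$ with constant probability, and then boost by running $O(\log(1/\delta))$ independent copies and selecting the best on a fresh validation sample.
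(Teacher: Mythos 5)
Your proof is correct and genuinely fleshes out what the paper leaves as a one-line sketch, via a different but equally standard instantiation of the low-degree algorithm. The paper's sketch invokes polynomial regression — ERM over degree-$\le d$ polynomials — and argues only that a good solution exists in that class, implicitly deferring the sample-complexity accounting to random-design linear regression guarantees (the inline author note asking whether to cite Hsu--Kakade--Zhang makes that plan explicit). You instead use the Linial--Mansour--Nisan variant: directly estimate each low-degree Fourier coefficient and output the truncated expansion. The LMN route is more elementary, because Parseval exactly splits $\|g - h^\star\|_2^2$ into the estimation error on the low-degree coefficients plus the tail mass $\le \eps$, with no cross term, and the estimation error is then a pure concentration problem rather than a regression analysis. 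Two points of precision. First, the Hoeffding-plus-union-bound path you present as primary picks up an extra $d\log n$ factor over the claimed $O(n^d C_{\cH}^2\log(1/\delta)/\eps)$, which cannot be absorbed into a big-$O$ constant; the argument that actually hits the stated bound is the one you relegate to a parenthetical — bound $\E[\text{estimation error}] \le O(n^d C_{\cH}^2/m)$ via variance, apply Markov for constant success probability, then boost via $O(\log(1/\delta))$ repetitions and select on a fresh validation holdout, where a Bernstein-type bound (using that the squared-loss random variable has variance $O(C_{\cH}^2)\cdot\cL_\cD)$) keeps the validation sample at $O(C_{\cH}^2\log(1/\delta)/\eps)$ rather than $1/\eps^2$ — so that should be your main line, not a footnote. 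Second, your flag that the marginal must be the uniform distribution for $\tilde{h}(T)$ to be an unbiased estimator of $\hat{h^\star}(T)$ is correct and matches how the lemma is actually deployed in \Cref{sec:ub-uniform}.
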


\noindent The algorithm operates by performing {\em polynomial regression}, that is, linear regression in the basis of monomials of degree at most $d$. The algorithm achieves the desired error because $g(x) := \sum_{T : |T| \le d} \hh(T) \chi_T(x)$ is such that $\|g - h\|_2^2 = \sum_{T : |T| > d} \hh(T)^2 \le \eps/2$, and hence there exists a good solution to the polynomial regression problem.\pritish{Cite Hsu-Kakade-Zhang?}

\section{Learning over Uniform Distribution}\label{sec:ub-uniform}

In this section we provide a learning algorithm for $k$-sparsely activated networks under the uniform distribution.

\begin{theorem}
\label{thm:generalk-uniform-ub}
There exists an $(\eps, \delta)$-PAC learning algorithm for $\cH_{n,s,k}^{W, B}$ with respect to the uniform distribution over $\cX$ that has sample complexity and run-time $O(n^d k^2 (W\sqrt{n} + B)^2 \log(1/\delta) / \eps)$ for $d = \Theta((k^8 W^4 \log (ns)^4 + k^6 B^4 \log s) / \eps^2)$
\end{theorem}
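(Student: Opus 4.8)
The plan is to apply the low-degree algorithm (\Cref{lem:low-degree-alg}) to the class $\cH_{n,s,k}^{W,B}$, so the entire task reduces to establishing low-degree Fourier concentration: showing that $\sum_{T : |T| > d} \hh(T)^2 \le \eps/2$ for every $h \in \cH_{n,s,k}^{W,B}$ with the stated $d$. By \Cref{fact:fourier}, this in turn follows from a bound on the noise sensitivity $\NS_\rho(h)$ for a suitable choice of $\rho = 1 - \Theta(1/d)$, and noise sensitivity is in turn controlled by average sensitivity: a standard fact is that $\NS_\rho(f) \lesssim (1-\rho)\cdot \AS(f)$ for a function into a bounded range (more precisely, one integrates the derivative of $t \mapsto \NS_{1-t}$, or uses the subadditivity of noise sensitivity along a path of correlated inputs). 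So the crux is: \emph{bound the average $\ell_2^2$-sensitivity of every $h \in \cH_{n,s,k}^{W,B}$}, which is exactly the main ingredient the introduction promises, namely $\AS(h) = O(k^4 \sqrt{n}\log(ns))$ up to the $W,B$ scaling factors.

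To bound $\AS(h)$ for $h(x) = \sum_{j=1}^s u_j \relu(\ang{w_j,x} - b_j)$, the first step is to reduce from the real-valued ReLU combination to the underlying halfspace indicators. The key structural observation is that, because at most $k$ neurons are active on any $x$ in the support, flipping a single bit $x_i$ can change the value of $h$ only through (a) neurons that are active at $x$ and whose linear form changes, and (b) neurons that switch from inactive to active or vice versa. The magnitude of each such change is controlled by $|u_j| \cdot |w_{j,i}| \le W \cdot (|w_{j,i}|/\|w_j\|_2)$ for the smooth part and by $|u_j|\cdot|\relu(\ang{w_j,x^{\oplus i}} - b_j)|$ for switching neurons; one must argue that even the latter is small because a neuron that just turned on has pre-activation close to $0$, except that this requires care since the pre-activation could jump by $2|w_{j,i}|$. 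I would therefore split each neuron's contribution to $\sen_h(x)$ into a "gradient-like" term and a "boundary-crossing" term, bound the former by $O(W^2)$ times the sensitivity budget of a linear function (using $\sum_i w_{j,i}^2/\|w_j\|_2^2 = 1$), and bound the latter using the indicator $\mathds{1}\{\ang{w_j,x}-b_j > 0\}$ together with the sparsity promise $\sum_j \mathds{1}\{\cdot\} \le k$. Summing over the at most $k$ active neurons and applying \Cref{lm:ashalfspace} to each halfspace's average sensitivity (which contributes the $\sqrt{n\log(1/p)}$ factor, with $p$ the activation probability) yields the claimed $\AS$ bound; a Cauchy–Schwarz step over the $k$ active indices is what produces the $k^4$ rather than $k^2$, and the $\log(ns)$ comes from a union bound / worst-case handling of the activation probabilities $p_j$, where small $p_j$ is good and $p_j$ bounded away from $0$ can only happen for few neurons.

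Once $\AS(h) \le A := O(k^2 (W\sqrt{n}\log(ns) + B\sqrt{\log s})^2)$ or the appropriate polynomial in $k, W, B, \log(ns), \sqrt n$ is in hand, I would set $\rho = 1 - c/d$ and use $\NS_\rho(h) \le (1-\rho) A = cA/d$, together with $1 - \rho^d \ge \Omega(1)$, to get from \Cref{fact:fourier} that $\sum_{|T| > d}\hh(T)^2 \le O(cA/d)$; choosing $d = \Theta(A/\eps)$ makes this at most $\eps/2$. Tracking the dependence of $A$ on $k, W, B$ and squaring where the sensitivity-to-noise-sensitivity conversion demands it should reproduce $d = \Theta((k^8 W^4 \log(ns)^4 + k^6 B^4 \log s)/\eps^2)$. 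Finally, $C_{\cH}^2 \le (k (W\sqrt n + B))^2 = O(k^2 (W\sqrt n + B)^2)$ since at most $k$ neurons are active and each outputs at most $W\sqrt n + B$ in absolute value (using $\|u\|_\infty \max\|w_j\|_2 \le W$ and $|x| \le \sqrt n$, plus the bias bound), and plugging $d$ and $C_{\cH}$ into \Cref{lem:low-degree-alg} gives the stated sample and time complexity $O(n^d k^2 (W\sqrt n + B)^2 \log(1/\delta)/\eps)$.

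The main obstacle I anticipate is the average-sensitivity bound, specifically the boundary-crossing term: a single bit flip can both deactivate up to $k$ neurons and activate up to $k$ new ones, and the newly activated neurons can have pre-activation as large as $2|w_{j,i}|$, which is not a priori small. Handling this cleanly — showing that summed over all $i$ and in expectation over $x$ these contributions are still only $\widetilde O(\sqrt n)$ per active neuron, presumably by relating them back to the average sensitivity of the halfspace $\mathds{1}\{\ang{w_j,x} - b_j > 0\}$ and absorbing the $|w_{j,i}|^2$ weights via $\sum_i w_{j,i}^2 = \|w_j\|_2^2$ — is where the real work lies, and is likely the source of the somewhat lossy $k^8$ and fourth-power dependence on $W$.
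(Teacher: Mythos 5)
The plan is right at the top level (low-degree algorithm + Fourier concentration via noise sensitivity via average sensitivity, and the sketch of the average-sensitivity bound via the decomposition into a ``same activation pattern'' term and a ``boundary-crossing'' term handled by \Cref{lm:ashalfspace} is in the spirit of the paper's \Cref{lem:avg-sens-generalk}), but there is a fatal gap in the conversion from average sensitivity to noise sensitivity. The inequality you invoke, $\NS_\rho(f) \le \tfrac12(1-\rho)\AS(f)$, is correct (it follows from $1-\rho^{|T|} \le |T|(1-\rho)$), but it is too weak for this theorem. The $\AS$ bound \emph{necessarily} carries a $\sqrt n$ factor — even a single unbiased halfspace has $\AS = \Theta(\sqrt n)$, and \Cref{lem:avg-sens-generalk} gives $\AS(h) = O(k^4 W^2\sqrt n \log(ns) + k^3 B^2\sqrt{\log s})$. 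With your conversion you would need $1-\rho \lesssim \eps / \AS(h) \lesssim \eps/\sqrt n$, hence $d \gtrsim 1/(1-\rho) \gtrsim \sqrt n/\eps$, and $n^d = n^{\Omega(\sqrt n/\eps)}$, which is superpolynomial and far from the claimed $d = \Theta(\poly(k,W,B,\log(ns))/\eps^2)$ with $n$ entering only through $\log$. No amount of bookkeeping in the $k,W,B$ exponents can remove that $\sqrt n$.

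What the paper uses instead (\Cref{lem:as-to-ns-generalk}) is the Peres bucketing argument, and it is the crux of the quasi-polynomial bound. To sample a $\rho$-correlated pair $(x,y)$, partition the $n$ coordinates uniformly at random into $r = \lfloor 2/(1-\rho)\rfloor$ buckets, assign each bucket a random sign, and flip exactly one random bucket's sign. This rewrites $\NS_\rho(h) = \tfrac1r \,\E_{z,\{A_e\}}[\AS(H_z)]$ where $H_z$ is an \emph{$r$-dimensional} function in the same class $\cH_{r,s,k}$ (with slightly worse scale parameters, controlled by a Hoeffding argument that costs only $\log(nsr)$). Applying the average-sensitivity bound in dimension $r$ rather than $n$ gives $\AS(H_z) \lesssim \sqrt r \cdot \poly(k,W,B,\log)$, and dividing by $r$ yields $\NS_\rho(h) \lesssim \sqrt{1-\rho}\cdot\poly(k,W,B,\log(ns/(1-\rho)))$, with no residual $\sqrt n$. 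It is precisely the $\sqrt{\cdot}$ dimension-dependence of the $\AS$ bound that this trick exploits: you trade the ambient dimension $n$ for the effective dimension $r$ and only pay $\sqrt r$. Without this step, your derivation cannot reach the stated $d$, so you should replace the naive $\NS_\rho \le (1-\rho)\AS$ step with the bucketing reduction and re-derive $\rho$ and $d$ from the resulting $\sqrt{1-\rho}$ bound.
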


At a high level, we show that all hypotheses in $\cH_{n,s,k}^{W, B}$ exhibit low-degree Fourier concentration and hence can be learned over the uniform distribution using the low-degree algorithm~(\Cref{lem:low-degree-alg}). To show Fourier concentration, we bound the noise sensitivity of sparse-activated networks by first showing a bound on the average sensitivity and then converting this to a bound on noise sensitivity.
\begin{lemma}\label{lem:avg-sens-generalk}
For all $h \in \cH_{n,s,k}^{W, B}$, it holds that
$\AS(h) ~\le~ O\prn{k^4 W^2 \sqrt{n} \log(ns) + k^3 B^2 \sqrt{\log s}}$.
\end{lemma}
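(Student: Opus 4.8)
\emph{Proof plan.} The plan is to bound $\sen_h(x)$ pointwise by a \emph{stable} term, handled deterministically, plus a \emph{boundary-crossing} term, handled in expectation; the latter is then reduced to the average sensitivities of the individual halfspaces $A_j(x):=\mathds{1}[\ang{w_j,x}-b_j>0]$, whose crucial property — a direct consequence of activation sparsity — is that $\sum_{j}\E_{x\sim\cU}[A_j(x)] = \E_{x\sim\cU}|\{j:A_j(x)=1\}| \le k$. Fix $h(x)=\sum_j u_j\,\relu(a_j(x))\in\cH_{n,s,k}^{W,B}$ with $a_j(x):=\ang{w_j,x}-b_j$, write $S(x):=\{j:A_j(x)=1\}$ (so $|S(x)|\le k$ for every $x\in\sbit^n$, the support of $\cU$), and note $a_j(x^{\oplus i})=a_j(x)-2x_i(w_j)_i$.

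First I would linearize the single-coordinate difference: using the elementary identity $\relu(a)-\relu(a-\delta)=\delta\cdot\mathds{1}[a>0]+r$ with $|r|\le|\delta|$ and $r=0$ unless $a$ and $a-\delta$ have opposite signs, applied with $a=a_j(x)$ and $\delta=2x_i(w_j)_i$ (so that the remainder $r_{ij}\ne 0$ only when $A_j(x)\ne A_j(x^{\oplus i})$),
\[
h(x)-h(x^{\oplus i}) ~=~ 2x_i\Big(\sum_{j\in S(x)}u_j w_j\Big)_i ~+~ \sum_j u_j\,r_{ij}.
\]
Squaring, summing over $i$, and using $(p+q)^2\le 2p^2+2q^2$: the first term contributes $2\,\|\sum_{j\in S(x)}u_j w_j\|_2^2 \le 2k^2W^2$ to $\sen_h(x)$ — a deterministic bound, by Cauchy--Schwarz over the $\le k$ active coordinates together with $\|u\|_\infty\max_j\|w_j\|_2\le W$. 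For the second term, $\{j:r_{ij}\ne 0\}\subseteq S(x)\cup S(x^{\oplus i})$ has size $\le 2k$ and $|r_{ij}|\le 2|(w_j)_i|\,\mathds{1}[A_j(x)\ne A_j(x^{\oplus i})]$, so Cauchy--Schwarz over these $\le 2k$ indices gives $\big(\sum_j u_j r_{ij}\big)^2\le 8k\|u\|_\infty^2\sum_j(w_j)_i^2\,\mathds{1}[A_j(x)\ne A_j(x^{\oplus i})]$; summing over $i$ and taking $\E_{x\sim\cU}$, this term contributes $O\!\big(k\|u\|_\infty^2\sum_j\E_x[\sum_i(w_j)_i^2\,\mathds{1}[A_j(x)\ne A_j(x^{\oplus i})]]\big)$ to $\AS(h)$.

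The crux is to bound $\sum_j\E_x[\sum_i(w_j)_i^2\,\mathds{1}[A_j(x)\ne A_j(x^{\oplus i})]]$. Here I would use the crude pointwise bound $\sum_i(w_j)_i^2\,\mathds{1}[A_j(x)\ne A_j(x^{\oplus i})]\le\|w_j\|_2^2\cdot\mathds{1}[\exists i:A_j(x)\ne A_j(x^{\oplus i})]$ together with $\Pr_x[\exists i:A_j(x)\ne A_j(x^{\oplus i})]\le\E_x[\#\{i:A_j(x)\ne A_j(x^{\oplus i})\}]=4\,\AS(A_j)$ (the last equality since $A_j$ is $\{0,1\}$-valued), so each summand is at most $4\|w_j\|_2^2\,\AS(A_j)$. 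Now apply \Cref{lm:ashalfspace} (replacing $A_j$ by its complement when $p_j:=\E_x[A_j(x)]>\tfrac12$, which leaves $\AS$ unchanged): $\AS(A_j)=O\big(q_j\sqrt{n\log(1/q_j)}\big)$ with $q_j:=\min(p_j,1-p_j)$. Using $\|w_j\|_2^2\le W^2/\|u\|_\infty^2$ and the sparsity consequence $\sum_j p_j\le k$ (whence $\sum_j q_j\le 2k$), together with the concavity of $t\mapsto t\sqrt{\log(1/t)}$ on $(0,1)$ and Jensen's inequality, one gets $\sum_j q_j\sqrt{\log(1/q_j)}=O(k\sqrt{\log s})$. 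Combining everything yields $\AS(h)=O(k^2W^2\sqrt n\sqrt{\log s})$, comfortably within the stated bound; the extra powers of $k$, the $\log(ns)$ factor, and the $B^2$ term in the statement provide slack that can be spent on a cruder accounting (for instance, controlling the boundary-crossing term pointwise rather than in expectation, or a looser treatment of very unbalanced halfspaces whose bias dominates $\|w_j\|_2$).

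The step I expect to be the main obstacle is this last summation over neurons: the per-neuron estimate $\|w_j\|_2^2\,\AS(A_j)$ summed naively over all $s$ neurons carries a useless factor of $s$, and it is precisely the activation-sparsity promise — entering through $\sum_j\E_x[A_j(x)]\le k$ — that lets one trade the factor $s$ (the number of neurons) for a factor $k$ (the total activation mass) and close the estimate via concavity; making the fine-grained, bias-dependent halfspace bound of \Cref{lm:ashalfspace} interact correctly with this mass constraint is where the real work lies. A secondary point needing care is that the bound $|S(x)\cup S(x^{\oplus i})|\le 2k$ invokes the sparsity promise at both $x$ and $x^{\oplus i}$, which is legitimate because the support of $\cU$ is all of $\sbit^n$.
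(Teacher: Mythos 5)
Your proof is correct, and it takes a genuinely different route from the paper's. The paper decomposes $\AS(h)$ according to whether the active set $R_x$ changes when a single coordinate is flipped: the ``same-set'' contribution is bounded via linearity of $\ell_{R_x}$, while the ``different-set'' contribution is bounded via $(a-b)^2\le 2(a^2+b^2)$, which drags in the absolute magnitude $|h(x)|=|\ell_{R_x}(x)|$; the latter is then controlled by a Hoeffding tail bound plus a union bound over the (roughly $\binom{s}{\le k}$) candidate active sets $R$, and this union step is exactly where the $\log(ns)$ factor and the $B^2$ term in the stated bound enter. Your linearization $\relu(a)-\relu(a-\delta)=\delta\,\mathds{1}[a>0]+r$ (with $|r|\le|\delta|$, and $r\ne 0$ only at a boundary crossing) sidesteps any comparison of $h(x)$ and $h(x^{\oplus i})$ as absolute magnitudes: since $\delta=2x_i(w_j)_i$ is bias-free, the biases $b_j$ affect only the \emph{locations} of boundary crossings and not the size of the increment, so no Hoeffding/union step and no bound on $|h|$ are needed at all; the remainder is then absorbed into the halfspace sensitivities via Cauchy--Schwarz over the $\le 2k$ neurons whose status flips. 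Both arguments ultimately run on the same engine — the bias-dependent halfspace bound of \Cref{lm:ashalfspace} combined with the activation-mass constraint $\sum_j p_j\le k$ and concavity of $p\mapsto p\sqrt{\log(1/p)}$ — but your reduction to it is tighter. The payoff is that you in fact establish the strictly stronger estimate $\AS(h)=O\bigl(k^2W^2\sqrt{n\log s}\bigr)$, with no $B$-dependence and a lower power of $k$; this of course still implies the lemma as stated, and if propagated through \Cref{lem:as-to-ns-generalk} would give a correspondingly smaller degree $d$ in \Cref{thm:generalk-uniform-ub}.
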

\begin{proof}
Consider $h \in \cH_{n,s,k}^{W, B}$ given as $h(x) = \sum_{j=1}^s u_j \relu(\ang{w_j, x} - b_j)$. For any $R \subseteq [s]$, let $\ell_R(x) ~=~ \ang{w^R, x} - b^R$ for $w^R :=\sum_{j \in R} u_j w_j$ and $b^R := \sum_{j \in R} u_j b_j$.
Since $\max_{j} |u_j| \cdot \max_j \|w_j\| \le W$ and $\max_j |u_j| \cdot \max_j |b_j| \le B$, it follows that $\|w^R\| \le |R| \cdot W$ and $|b^R| \le |R| \cdot B$.
For any $x \in \cX$, let $R_x \subseteq [s]$ be defined as $R_x := \{j \in [s] : \ang{w_j, x} > b_j\}$. Since $h$ is $k$-sparse, we have that $|R_x| \le k$ and hence $\|w^{R_x}\| \le k W$ and $|b^{R_x}| \le k B$.
It is easy to see that for $h \in \cH_{n,s,k}^{W, B}$ it holds that $h(x) = \ell_{R_x}(x)$ for all $x \in \cX$.

The average sensitivity of $h$ is given as
\begin{align*}
\AS(h)
&\textstyle~=~ \E_{x} \sq{\sum_{i=1}^n \frac14 \prn{h(x) - h(x^{\oplus i}}^2}\\
&\textstyle~=~ \E_{x} \sq{\sum_{i=1}^n \frac14 \prn{h(x) - h(x^{\oplus i})}^2 \cdot \mathds{1}\{R_x = R_{x^{\oplus i}}\}}\tag{U}\label{eq:same-partition}\\
&\textstyle~~~~+~ \E_{x} \sq{\sum_{i=1}^n \frac14 \prn{h(x) - h(x^{\oplus i}}^2 \cdot \mathds{1}\{R_x \ne R_{x^{\oplus i}}\}}\tag{V}\label{eq:diff-partition}
\end{align*}
We bound term \eqref{eq:same-partition} as,
\begin{align*}
\text{\eqref{eq:same-partition}}
&\textstyle~=~ \E_{x} \sq{\sum_{i=1}^n \frac14 \prn{\ell_{R_x}(x) - \ell_{R_x}(x^{\oplus i})}^2 \cdot \mathds{1}\{R_x = R_{x^{\oplus i}}\}}\\
&\textstyle~\le~ \E_{x} \sq{\sum_{i=1}^n \frac14 \prn{w^{R_x}_i}^2}
~=~ \E_{x} \sq{\frac14 \|w^{R_x}\|^2}
~\le~ \frac{k^2W^2}{4}\,.
\end{align*}
We bound term \eqref{eq:diff-partition} as follows using the inequality $(a-b)^2 \le 2(w^2 + b^2)$,
\begin{align*}
\text{\eqref{eq:diff-partition}}
&\textstyle~=~ n \E_{x, i} \sq{\frac14 \prn{h(x) - h(x^{\oplus i}}^2 \cdot \mathds{1}\{R_x \ne R_{x^{\oplus i}}\}}\\
&\textstyle~\le~ n \E_{x, i} \sq{\frac12 \prn{h(x)^2 + h(x^{\oplus i})^2} \cdot \mathds{1}\{R_x \ne R_{x^{\oplus i}}\}}\\
&\textstyle~=~ n \E_{x, i} \sq{h(x)^2 \cdot \mathds{1}\{R_x \ne R_{x^{\oplus i}}\}} \qquad \text{(by symmetry)}
\end{align*}
For $g_j(x) := \mathds{1}\{\ang{w_j, x} > b_j\}$, we have that
\begin{align*}
\Pr_{x, i} [R_x \ne R_{x^{\oplus i}}] 
&\textstyle~\le~ \frac{1}{n} \sum_{j=1}^s \sum_{i=1}^n \Pr_x[g_j(x) \ne g_j(x^{\oplus i})] = \frac{1}{n} \sum_{j=1}^s \AS(g_j)
\end{align*}
Note that $\sum_{j=1}^s g_j(x) \le k$ (by $k$-sparsity), and hence for $p_j = \E_x [g_j(x)]$, we have that $\sum_{j=1}^s p_j \le k$. From \Cref{lm:ashalfspace}, we have that $\AS(g_j) \le p_j \sqrt{n \log(1/p_j)}$. Thus,
\begin{align*}
\Pr_{x, i} [R_x \ne R_{x^{\oplus i}}] &~\le~ \frac1n \sum_{j=1}^s p_j \sqrt{n \log(1/p_j)} ~\le~ \frac{k \sqrt{\log(s/k)}}{\sqrt{n}}
\end{align*}
where we use concavity of $p \sqrt{\log(1/p)}$ for $p \in (0, 1)$.
For each $R \subseteq [s]$ with $|S| \le k$, we have by Hoeffding bound that for some sufficiently large $c$ and $t = ckW\sqrt{\log(n^k s)} + kB$,
\begin{align*}
    &\Pr_{x\sim \cU}\sq{\exists R \subseteq [s]: |R| \le k \ \text{ and } \ \left|\ang{w^R, x} - b^R\right| > t}\\
    &~\le~ \Pr_{x\sim \cU}\sq{\exists R \subseteq [s]: |R| \le k \ \text{ and } \ \left|\ang{w^R, x}\right| > t - \left|b^R\right|}\\
    &~\le~ 2 n^k \exp\prn{\frac{-(t-|b^R|)^2}{2\|w^R\|^2}} \le \frac{1}{(ns)^4},
\end{align*}
Hence, in particular we have that
\begin{align*}
\Pr_{x}[|\ell_{R_x}(x)| \ge ck^{1.5}W\sqrt{\log (ns)} + kB] &~\le~ \frac{1}{n^{4} s^4}
\end{align*}
And for all $x$, we also have that $|\ell_{R_x}(x)| \le kW\sqrt{n} + kB$ holds with probability $1$. Thus, we can upper bound \eqref{eq:diff-partition} as,
\begin{align*}
\text{\eqref{eq:diff-partition}}
&\textstyle~\le~ n \cdot \sq{\prn{\frac{k \sqrt{\log(s/k)}}{\sqrt{n}} - \frac{1}{(ns)^4}} (c k^{1.5} W \sqrt{\log(ns)} + k B)^2 + \frac{1}{(ns)^4} \cdot (kW\sqrt{n} + kB)^2}\\
&~\le~ O\prn{k^4 W^2 \sqrt{n} \log(ns) + k^3 B^2\sqrt{\log s}}
\end{align*}
Combining the bounds on \eqref{eq:same-partition} and \eqref{eq:diff-partition} completes the proof.
\end{proof}

Next, we can use the bound on average sensitivity to bound the noise sensitivity of functions in $\cH_{n,s,k}^{W, B}$. To do so we use an argument attributed to Peres for converting bounds on average sensitivity to bounds on noise sensitivity, allowing us to get better low-degree approximations.
\begin{lemma}\label{lem:as-to-ns-generalk}
For any $h \in \cH_{n,s,k}^B$,
\begin{align*}
\NS_{\rho}(h) = \sqrt{(1-\rho)} \cdot O(k^4 W^2 \log^2(ns/(1-\rho)) + k^3 B^2 \sqrt{\log s}).
\end{align*}
\end{lemma}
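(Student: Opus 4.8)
The plan is to use the classical Peres argument that bounds noise sensitivity in terms of average sensitivity along a random-walk interpretation of the noise operator. Recall that $\NS_\rho(h)$ can be viewed as follows: sample $x \sim \cU$, then form $y \sim N_\rho(x)$ by flipping each coordinate independently with probability $(1-\rho)/2$. The key observation is that, conditioned on the set $J$ of coordinates that get flipped, $|J|$ is concentrated around $(1-\rho)n/2$, and the transition from $x$ to $y$ can be realized by flipping the coordinates in $J$ one at a time in a uniformly random order. Writing $x = z_0, z_1, \dots, z_{|J|} = y$ for this sequence of single-bit flips, telescoping and Cauchy--Schwarz give
\begin{align*}
\tfrac14 (h(x) - h(y))^2 ~\le~ \tfrac14 |J| \sum_{t=1}^{|J|} (h(z_{t-1}) - h(z_t))^2\,.
\end{align*}
Taking expectations, each individual step $(z_{t-1}, z_t)$ is (marginally) an edge of the hypercube traversed by flipping a uniformly random coordinate of a point whose distribution is close to uniform; more precisely, averaging over the random order and over $x$, the expected squared increment per step is at most $\frac{1}{n} \cdot 2\,\E_x[\sen_h(x)] = \frac{2}{n} \AS(h)$ up to the mild non-uniformity of $z_{t-1}$. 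Hence $\E[\tfrac14(h(x)-h(y))^2 \mid |J|] \lesssim \frac{|J|^2}{n} \AS(h)$, and since $\E[|J|^2] = O((1-\rho)^2 n^2 + (1-\rho)n)$, we get $\NS_\rho(h) \lesssim (1-\rho) n \cdot \AS(h) / n \cdot (\text{stuff})$. This is too lossy as stated; the standard fix (this is exactly Peres's trick) is to recurse: partition $[n]$ into blocks, apply the bound within blocks, and observe $\AS$ is monotone under restriction, which turns the bound into $\NS_\rho(h) \le O(\sqrt{1-\rho}) \cdot \AS(h)$ with no extra dimension factors. I would cite the clean statement of this (e.g. from O'Donnell's book, \texttt{\cite{o2014analysis}}) rather than reprove it, giving $\NS_\rho(h) \le O(\sqrt{1-\rho})\,\AS(h)$ for the Boolean-valued case.

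The subtlety here is that $h$ is \emph{real-valued}, not Boolean, so the textbook Peres bound does not apply verbatim; and worse, $\AS(h)$ from \Cref{lem:avg-sens-generalk} has a $\sqrt n$ in it, so $\sqrt{1-\rho}\cdot\AS(h)$ would carry a $\sqrt n$, which is not what the lemma claims — the claimed bound has only $\log$ factors in $n$ inside a $\log^2(ns/(1-\rho))$. So the real argument must be more careful: we should not apply Peres to $h$ directly but exploit the structure $h(x) = \ell_{R_x}(x)$ locally. The plan is to split $\NS_\rho(h)$ according to whether $x$ and $y$ lie in the same "linear piece." Concretely, for $x \sim \cU$ and $y \sim N_\rho(x)$, write
\begin{align*}
\NS_\rho(h) ~=~ \underbrace{\E\sq{\tfrac14(h(x)-h(y))^2 \,\mathds{1}\{R_x = R_y\}}}_{\text{(I)}} ~+~ \underbrace{\E\sq{\tfrac14(h(x)-h(y))^2 \,\mathds{1}\{R_x \ne R_y\}}}_{\text{(II)}}\,.
\end{align*}
For term (I), on the event $R_x = R_y$ we have $h(x)-h(y) = \ang{w^{R_x}, x - y}$, and $x-y$ is supported on the flipped set $J$; so $(h(x)-h(y))^2 \le \|w^{R_x}\|^2 \cdot (\text{number of flips in the ``active'' directions})$, and taking expectations, using $\|w^{R_x}\| \le kW$ and $\E|J| = (1-\rho)n/2$ together with the $\ell_2$ geometry, gives term (I) $\le O((1-\rho) k^2 W^2)$ — actually one should be slightly more careful and get $(h(x)-h(y))^2 \le \|w^{R_x}\|_\infty \cdot \|w^{R_x}\|_1 \cdot |J|$-type bounds or simply $\|w^{R_x}\|^2\,|J|$ giving $O((1-\rho)k^2W^2)$ after noting $\sum_i (w_i^{R_x})^2 \le k^2W^2$; this contributes the $\sqrt{1-\rho}\cdot k^4W^2$-type term once we also allow for the fact that $\rho$-noise flips a $(1-\rho)/2$-fraction so $\sqrt{1-\rho}$ vs $(1-\rho)$ can be arranged with the right bookkeeping (comparing to $\NS_{\rho'}$ for $\rho' = \sqrt\rho$ and using $1-\sqrt\rho \le 1-\rho$). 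For term (II), $R_x \ne R_y$ is a low-probability event controlled by the average sensitivity of the $s$ halfspaces $g_j$, and conditioned on it we bound $(h(x)-h(y))^2 \le 2(h(x)^2 + h(y)^2)$ and use the high-probability bound $|h(x)| = |\ell_{R_x}(x)| \le O(kW\sqrt{\log(ns)} + kB)$ from the tail estimate already established in the proof of \Cref{lem:avg-sens-generalk} (with the failure probability $(ns)^{-4}$ catching the worst-case $kW\sqrt n + kB$ bound). The probability that $R_x \ne R_y$ under $\rho$-noise is, by a union bound over the $j$ and over which of the flipped coordinates matters, at most $\sum_j \NS_\rho(g_j)$-flavored, which by Peres applied to the \emph{Boolean} $g_j$ is $O(\sqrt{1-\rho}) \sum_j \AS(g_j) = O(\sqrt{1-\rho})\, k\sqrt{n\log(s/k)}$ as in the lemma above. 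Multiplying the $\sqrt n$-free part — wait, there is still a $\sqrt n$; it is cancelled because the per-sample magnitude $|h(x)|^2$ on the good event is only $O(k^2W^2\log(ns) + k^2B^2)$ (no $n$), so term (II) $\le O(\sqrt{1-\rho})\cdot k\sqrt{n}/\sqrt n \cdot$ ... — the $\sqrt n$ from $\AS(g_j)$ multiplies $\Pr[\cdot]$ which already has the compensating $1/\sqrt n$ hidden in the noise-sensitivity normalization; this is exactly the bookkeeping one must do carefully. The cleanest route is: $\Pr_{x,y}[R_x \ne R_y] \le \sum_j \Pr[g_j(x)\ne g_j(y)] = \sum_j \NS_\rho(g_j)$, and $\NS_\rho(g_j) \le O(\sqrt{1-\rho})\AS(g_j) \le O(\sqrt{1-\rho})\, p_j\sqrt{n\log(1/p_j)}$; summing with $\sum p_j \le k$ and concavity gives $\Pr[R_x\ne R_y] \le O(\sqrt{1-\rho})\cdot k\sqrt{n\log(s/k)}$. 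Hmm, that has $\sqrt n$ and would blow up term (II).

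Let me reconsider — I think the resolution is that one does \emph{not} convert $\Pr[R_x \ne R_y]$ via Peres but instead observes that flipping a $(1-\rho)/2$-fraction of coordinates changes $g_j$ with probability $\approx (1-\rho)/2 \cdot n \cdot \Pr_{x,i}[g_j(x)\ne g_j(x^{\oplus i})]/1 = $ roughly $(1-\rho)\cdot \AS(g_j)/2$ for small $1-\rho$ — no. The honest answer is: the correct route is the one used in the analogous noise-sensitivity bounds for halfspaces, namely $\NS_\rho(g_j) = O(\sqrt{1-\rho}\cdot \sqrt{\log(1/p_j)}\cdot p_j)$ for a $p_j$-biased halfspace (again citable from \texttt{\cite{kane14average}} / Peres-type arguments applied to halfspaces, where the $\sqrt n$ in $\AS$ is replaced by $\sqrt{1-\rho}$ in $\NS$). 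Then $\Pr[R_x \ne R_y] \le \sum_j \NS_\rho(g_j) \le O(\sqrt{1-\rho})\sum_j p_j\sqrt{\log(1/p_j)} \le O(\sqrt{1-\rho})\,k\sqrt{\log(s/k)}$, which has \emph{no} $\sqrt n$. Multiplying by the good-event magnitude $O(k^2W^2\log(ns) + k^2B^2)$ and adding the negligible $(ns)^{-4}$-probability worst-case term $n\cdot(kW\sqrt n+kB)^2 \cdot (ns)^{-4}$ gives term (II) $\le O(\sqrt{1-\rho})(k^3W^2\sqrt{\log s}\,\log(ns) + k^3 B^2\sqrt{\log s})$, and combined with term (I) this yields the claimed $\NS_\rho(h) = \sqrt{1-\rho}\cdot O(k^4W^2\log^2(ns/(1-\rho)) + k^3B^2\sqrt{\log s})$ after absorbing the $\rho$-dependence of the halfspace tail into the $\log(1/(1-\rho))$ term.

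The main obstacle, and the step I would spend the most care on, is getting the $\sqrt n$ to cancel honestly — i.e. establishing the noise-sensitivity analogue $\NS_\rho(g_j) = O(\sqrt{1-\rho}\,\sqrt{\log(1/p_j)}\,p_j)$ of \Cref{lm:ashalfspace} for biased halfspaces (either by direct appeal to \texttt{\cite{kane14average}}, or by running the Peres reduction on the Boolean functions $g_j$ and invoking \Cref{lm:ashalfspace} block-wise), and simultaneously controlling the magnitude of $h$ on the "bad" event by the already-proven sub-Gaussian tail of $\ell_{R}(x)$. Everything else — the $(a-b)^2 \le 2(a^2+b^2)$ split, the union bound over $R$, the substitution $\rho \mapsto \sqrt\rho$ to trade $(1-\rho)$ for $\sqrt{1-\rho}$ — is routine bookkeeping of the kind already carried out in the proof of \Cref{lem:avg-sens-generalk}.
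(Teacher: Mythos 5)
Your proposal goes a genuinely different route from the paper's, and it also contains a misconception and a concrete gap. The paper's proof \emph{does} apply Peres's bucketing to $h$ directly: it partitions $[n]$ into $r = \lfloor 2/(1-\rho)\rfloor$ random buckets, collapses each bucket into a single $\pm 1$ coordinate (getting an $r$-dimensional function $H_z$), writes $\NS_\rho(h) = \tfrac1r\E_{z,\{A_e\}}[\AS(H_z)]$, and then invokes \Cref{lem:avg-sens-generalk} \emph{for $H_z$ in $r$ dimensions}. You dismissed this route on the grounds that "$\sqrt{1-\rho}\cdot\AS(h)$ would carry a $\sqrt n$," but that misreads what the bucketing buys: after bucketing, the $\AS$ bound is applied to an $r$-dimensional function, so the $\sqrt n$ in \Cref{lem:avg-sens-generalk} becomes $\sqrt r = O(1/\sqrt{1-\rho})$, and there is no $\sqrt n$ left. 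The only place $n$ survives is inside a logarithm, and that comes from a separate concentration step: the bucketed weight vectors $w'_j$ (with $w'_{je} = \sum_{l\in A_e} w_{jl}z_l$) are not bounded by $W$ any more, so the paper uses Hoeffding to show $\|w'_j\|^2 = O(\log(nsr))\|w_j\|^2$ with high probability, which is exactly where the $\log^2(ns/(1-\rho))$ in the statement comes from. This weight-renormalization issue is the real subtlety the proof must handle, and your proposal does not anticipate it.

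On your alternative (I)+(II) decomposition: there is a genuine gap in term (I). On the event $R_x = R_y$ you have $h(x)-h(y) = 2\sum_{i\in J} w^{R_x}_i x_i$ where $J$ is the flipped set, and the only justified bound is Cauchy--Schwarz: $\tfrac14(h(x)-h(y))^2 \le |J|\,\|w^{R_x}\|^2 \le |J|\,k^2W^2$. Taking expectations with $\E|J| = (1-\rho)n/2$ gives $O((1-\rho)\,n\,k^2W^2)$, not the $O((1-\rho)k^2W^2)$ you claim. The "expected value" route that would cancel the cross-terms requires $x$ restricted to $J$ to be uniform \emph{conditioned on} $R_x = R_y = R$, which fails because the event $\{R_x = R\}$ is a conjunction of halfspace conditions involving all coordinates of $x$. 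In the paper's $\AS$-side decomposition this issue never arises because one flips a \emph{single} coordinate, so $(\ell_R(x)-\ell_R(x^{\oplus i}))^2 = 4(w^R_i)^2$ with no cross-terms; under bucketing, the paper inherits this one-flip structure in the $r$-dimensional space. Your term (II) also quietly needs a noise-sensitivity analogue of \Cref{lm:ashalfspace} for biased halfspaces ($\NS_\rho(g_j) = O(\sqrt{1-\rho}\,p_j\sqrt{\log(1/p_j)})$, no $\sqrt n$); you flag this as the crux and suggest it can be proved "block-wise" — that is again Peres's bucketing, so you would be re-deriving the paper's key tool separately for each $g_j$ rather than applying it once to $h$. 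The net assessment: the decomposition you propose could probably be made to work with sub-Gaussian control of $\sum_{i\in J}w^{R_x}_i x_i$ via a union bound over $R$ and concentration, but as written term (I) is off by a factor of $n$, and the overall plan is less economical than the paper's single global bucketing step.
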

The proof of \Cref{lem:as-to-ns-generalk} is provided in \appendixref{apx:as-to-ns}.

\begin{proofof}[\theoremref{thm:generalk-uniform-ub}]
We combine \Cref{fact:fourier}, \Cref{lem:low-degree-alg} and \Cref{lem:as-to-ns-generalk}. Fix an error parameter $\eps$. 
Then, by \Cref{lem:as-to-ns-generalk}, there is a constant $c > 0$, such that for
\[
1-\rho = c \eps^2 \cdot \min\left\{\frac{\log(knsW^2/\eps)}{k^8 W^4 \log^4(ns)}, \frac{1}{k^6 B^4 \log s}\right\}
\]
any $h \in \cH_{n,s,k}^{W, B}$, satisfies
$$\NS_\rho(h) \leq \eps/3$$

Thus, we can choose a suitable $d = \Theta((k^8 W^4 \log (ns)^4 + k^6 B^4 \log s) / \eps^2)$, such that by \Cref{fact:fourier}, 
$$\textstyle
\sum_{T: |T| > d} \hat{f}(T)^2 \leq \frac{\eps}{3 (1-\rho^d)} \leq \frac{\eps}{d(1-\rho)} \le \eps\,.
$$
Finally, note that $C_{\cH_{n,s,k}^{W,B}} = k(W \sqrt{n} + B)$; since at most $k$ neurons are active on any input, and each neuron can at most contribute $W \sqrt{n} + B$. Thus, the theorem now follows from combining the above with \Cref{lem:low-degree-alg}. The run-time and sample complexity will be $O(n^d \log(1/\delta)/\eps)$ where $d$ is as above. 
\end{proofof}

\begin{remark}\label{rem:sparse-with-high-prob}
\Cref{thm:generalk-uniform-ub} can be extended to hold in case of the hypothesis class where $k$-sparsity need not hold for all inputs $x \in \cX$, but holds with probability at least $1 - 1/\poly(n, s)$ over the input distribution, that is, $\Pr_{x \sim \cU}[\#\{i \in [s] : \ang{w_i, x} + b_i > 0 \} > k] \le 1/\poly(n, s)$. This is by decomposing $\AS(h)$ into \eqref{eq:same-partition}, \eqref{eq:diff-partition} and a third term handling $x$ for which the $k$-sparsity is violated.
\end{remark}
\section{\boldmath Lower Bounds for Learning \texorpdfstring{$\cH_{n,s,1}$}{H n,s,1}}\label{sec:lb-uniform}

Note that the previous section implies a quasi-polynomial time learning algorithm for the class $\cH_{n,s,1}$ of $1$-sparsely activated networks. We next show that a quasi-polynomial run-time is likely necessary for learning $\cH_{n,s,1}$ under the uniform distribution and stronger lower bounds under arbitrary distributions. 
\paragraph{Sparse Activations Can Simulate Juntas} We first show that our proposed learning algorithms for the case of the uniform distribution have near-optimal runtime under a widely believed conjecture on the hardness of learning juntas. Let $\cJ_{n,p}$ denote the set of Boolean functions $f:\{1,-1\}^n \to \{-1,1\}$ that only depend on at most $p$ variables. 
\begin{conjecture}[Hardness of learning Juntas] (see e.g. \cite{mossel03juntas, feldman11lower}) \label{conj:junta-hardness}
There is no $(\eps, \delta)$-PAC learning algorithm for learning $\cJ_{n,p}$ under the uniform distribution on the hypercube that runs in time $n^{o(p)}$.\pritish{We cited \cite{mossel03juntas} and \cite{feldman11lower} in rebuttal. Cite those properly and note what they exactly say.}
\nishanth{problem is these citations also don't formally state this as a conjecture IIRC.}
\end{conjecture}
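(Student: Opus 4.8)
The plan is not to prove \Cref{conj:junta-hardness} outright --- it is a long-standing open problem, and (as in prior work) it is invoked here only as a hardness assumption --- but to lay out the evidence that makes it credible and the route an actual proof would have to take. First I would record the unconditional evidence in the Statistical Query (SQ) model. A parity on a fixed set of $p$ coordinates is a special case of a $p$-junta, and the $\binom{n}{p}$ such parities are pairwise orthogonal under $\cU$; by Parseval any single statistical query answered with tolerance $\tau$ can be $\tau$-correlated with at most $1/\tau^2$ of them, so pinning down the hidden $p$-set among $\binom{n}{p} = n^{\Omega(p)}$ possibilities forces $\Omega(\tau^2 \binom{n}{p}) = n^{\Omega(p)}$ queries whenever $\tau \ge n^{-o(p)}$ (this is the classical Kearns--Blum et al.~argument; see also \cite{feldman11lower}). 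Since essentially every known distribution-specific learning technique --- the low-degree/Fourier algorithm used in \Cref{thm:generalk-uniform-ub}, polynomial regression, Linial--Mansour--Nisan, boosting over weak learners --- is realizable in the SQ model, this rules out beating our quasi-polynomial upper bound for $\cH_{n,s,1}$ by any of these methods. I would complement this with the algorithmic state of the art: the best known \emph{general} (non-SQ) learner for $p$-juntas under $\cU$ still runs in time $n^{cp}$ for a constant $c<1$ bounded away from $0$ (\cite{mossel03juntas} and later refinements), so even an $n^{o(p)}$ algorithm --- never mind a polynomial one --- would be a substantial breakthrough.

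Second, I would isolate precisely what is missing for a genuine proof of the conjecture. The SQ lower bound does not suffice, because juntas are \emph{not} hard outside the SQ model: the parity sub-case is solved in polynomial time by Gaussian elimination, so any proof for arbitrary algorithms must route around that. The natural strategy is a reduction from a problem believed to need super-polynomial time with a fine-grained dependence matching $n^{\Theta(p)}$ --- candidates are refuting random $k$-CSPs, learning parity with noise / learning with errors, or planted-clique-style problems --- showing that an $n^{o(p)}$ uniform-distribution learner for $\cJ_{n,p}$ would solve it too fast. The obstacle, and the reason this is still a conjecture, is that no such reduction is known that simultaneously preserves the uniform marginal, keeps the target exactly realizable (rather than noisy/agnostic), and yields the $n^{\Omega(p)}$ rather than merely $n^{\Omega(1)}$ quantitative bound: noisy-parity reductions introduce label noise, and cryptographic/average-case reductions typically give only constant-exponent hardness.

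Given this, the concrete thing I would actually do in the paper is: state \Cref{conj:junta-hardness} as an assumption, cite \cite{mossel03juntas} for the best upper bound (so the gap to $n^{o(p)}$ is explicit) and \cite{feldman11lower} together with the classical parity SQ bound for the SQ-model evidence, and then use it downstream as a black box --- in particular, combined with the embedding of $\log_2 s$-juntas into $\cH_{n,s,1}$ from \Cref{thm:k=1-uniform-lb}, it implies that the $n^{O(\poly(\log(ns))/\eps^2)}$ run-time of our uniform-distribution algorithm cannot be improved to $n^{o(\log s)}$, establishing near-optimality of \Cref{thm:generalk-uniform-ub} at $k=1$. The only part of the conjecture I can honestly claim to prove is its SQ restriction, via the orthogonality/Parseval argument sketched above; the general statement I would leave, as stated, as a conjecture.
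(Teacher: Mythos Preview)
Your proposal is correct and aligns with the paper's treatment: \Cref{conj:junta-hardness} is a conjecture, not a theorem, and the paper simply states it as an assumption with the same two citations and immediately uses it as a black box to derive \Cref{thm:k=1-uniform-lb}. Your write-up goes further than the paper by marshalling the SQ evidence and the algorithmic state of the art to motivate the conjecture, but the paper itself offers no such justification---it records the conjecture in two lines and moves on---so your approach is strictly more thorough while arriving at the same endpoint.
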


\noindent The conjecture implies that there is no learning algorithm for $\cH_{n,s,1}$ that runs in $n^{o(\log s)}$ time.

\begin{theorem}\label{thm:k=1-uniform-lb}
Assuming \Cref{conj:junta-hardness}, there is no $(\eps, \delta)$-PAC learning algorithm for $\cH_{n,s,1}^{W, B}$ for $W = \sqrt{\log_2 s}$ and $B = \log_2 s$ over $\cU$ that runs in $n^{o(\log s)}$ time.
\end{theorem}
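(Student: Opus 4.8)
The plan is to reduce learning $\log_2 s$-juntas to learning $\cH_{n,s,1}^{W,B}$, so that \Cref{conj:junta-hardness} immediately rules out an $n^{o(\log s)}$-time learner. Concretely, set $p = \lfloor \log_2 s\rfloor$; I would show $\cJ_{n,p} \subseteq \cH_{n,s,1}^{W,B}$ as function classes on $\{-1,1\}^n$, for $W = \sqrt{\log_2 s}$ and $B = \log_2 s$, from which the theorem follows by running any hypothetical learner for $\cH_{n,s,1}^{W,B}$ unchanged on an arbitrary $\cJ_{n,p}$-realizable, uniform-marginal distribution.

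The embedding --- the only place there is anything to do --- is a ``one-hot lookup'' gadget. Given $f \in \cJ_{n,p}$, assume without loss of generality that it depends only on coordinates $[p]$; for each $\alpha \in \{-1,1\}^p$ use a neuron with weight vector $w_\alpha = (\alpha, 0, \dots, 0)$, bias $b_\alpha = p-1$, and output coefficient $u_\alpha = f(\alpha) \in \{-1,1\}$ (writing $f(\alpha)$ for the value of $f$ on any $x$ with $x|_{[p]} = \alpha$). Then $\ang{w_\alpha, x} = \sum_{i\le p}\alpha_i x_i$ equals $p$ when $x|_{[p]} = \alpha$ and is $\le p-2$ otherwise, so $\ang{w_\alpha,x} - b_\alpha$ is $1$ when $x|_{[p]} = \alpha$ and $\le -1$ otherwise; hence on every $x$ exactly one neuron fires, it outputs $\relu(1)=1$, and $h(x) = u_{x|_{[p]}} = f(x)$. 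In particular $1$-sparsity holds on all of $\{-1,1\}^n$, so in particular on the support of any distribution. If $s > 2^p$, pad with $s - 2^p$ permanently-inactive neurons ($w_j = 0$, $b_j = 1$, $u_j = 0$), which changes neither the function nor the sparsity. The scaling is met with room to spare: $\|u\|_\infty = 1$, $\max_j\|w_j\|_2 = \sqrt{p} \le W$, and $\max_j|b_j| = \max\{p-1,1\} \le B$, so $h \in \cH_{n,s,1}^{W,B}$. Thus every $\cJ_{n,p}$-realizable distribution is $\cH_{n,s,1}^{W,B}$-realizable.

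To finish, suppose $\cA$ is an $(\eps,\delta)$-PAC learner for $\cH_{n,s,1}^{W,B}$ under $\cU$ with runtime $n^{o(\log s)}$. Running $\cA$ verbatim on a $\cJ_{n,p}$-realizable distribution with uniform marginal yields, with probability $\ge 1-\delta$, a predictor with $\cL_\cD \le \eps$; so $\cA$ is also an $(\eps,\delta)$-PAC learner for $\cJ_{n,p}$ under $\cU$ with runtime $n^{o(\log s)} = n^{o(p)}$ (as $p = \Theta(\log s)$), contradicting \Cref{conj:junta-hardness}. If one prefers the $0/1$-loss formulation of junta learning, thresholding $\cA$'s output converts squared loss $\le \eps$ into misclassification error $\le \eps$ via Markov's inequality, since the target is $\pm 1$-valued. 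I do not expect a genuine obstacle here: all the content is in the gadget, and the only care needed is to make exactly one ReLU active on every input while simultaneously respecting the tight constraints $\|u\|_\infty\cdot\max_j\|w_j\|_2 \le W$ and $\|u\|_\infty\cdot\max_j|b_j|\le B$ with $W = \sqrt{\log_2 s}$, $B = \log_2 s$, together with the minor bookkeeping when $s$ is not a power of two.
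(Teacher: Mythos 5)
Your proof is correct and is essentially the same as the paper's: the same one-hot lookup gadget (one neuron with weight $(\alpha, 0, \dots, 0)$ and bias $p-1$ per assignment $\alpha \in \{-1,1\}^p$ of the relevant coordinates) together with the same reduction from junta learning. You spell out a couple of bookkeeping details the paper leaves implicit (padding when $s$ is not a power of two, thresholding squared loss to $0/1$ error), but the substance is identical.
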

\begin{proof}
We show that $\cH_{n,s,1}^{\sqrt{p}, p} \supseteq \cJ_{n, p}$ for all $p \le \floor{\log_2 s}$, that is, for $p \le \floor{\log_2 s}$ any $p$-junta $f \in \cJ_{n,p}$ can be expressed as $\sum_{j \in [s]} u_j \relu(\ang{w_j, x} + b_j)$ where $\|u\|_\infty \le 1$ and $\|w_j\|_2 \le \sqrt{\log_2 s}$.
Suppose w.l.o.g. that $f$ depends on $x_1, \ldots, x_p$. Let $w_1, \ldots w_{2^p}$ be distinct vectors that take all possible $\pm 1$ values in the first $p$ coordinates, and are $0$ on other coordinates. Let $u_j = f(x)$ for any $x$ such that $x_i = w_{ji}$ for all $i \in [p]$ and $j \in [2^p]$. Let $w_j = \mathbf{0}$ and $u_j = 0$ for all $j > 2^p$. It is now easy to verify that for all $x \in \cX$,
\[\textstyle
f(x) = \sum_{j \in [2^p]} u_j \relu(\ang{w_j, x} - p + 1), \quad \text{since } \relu(\ang{w_j, x} - p + 1) = \mathds{1}\{x_i = w_{ji} \text{ for all } i \in [p]\}
\]
Thus, the theorem follows under the assumption of \Cref{conj:junta-hardness}.
\end{proof}

\paragraph{Hardness Under Arbitrary Distributions} We next show that one-sparse activation networks over $\{1,-1\}^n$ can simulate parities of size $\Omega(\sqrt{n})$. Fix an integer $m$, and for $S \subseteq [m]$, let $\chi_S:\{1,-1\}^m \rightarrow \{0,1\}$ be defined by $\chi_S(y) = 1$ if and only if $\sum_{i\in S} y_i$ is even. Now, we can use the following simple identity (similar identities were used for similar purposes for example in \cite{KlivansS06})
$$\textstyle
\chi_S(y) = \sum_{a \in \{-m,\ldots,m\}: a\; even} 2 \relu\left( \frac{1}{2} - \left(\sum_{i \in S} y_i - a\right)^2\right). $$

Note that for any $y \in\{1,-1\}^m$, at most one ReLU node is active. This is not quite enough to capture $\cH_{n,s,1}$ as the function inside the ReLUs are not linear. To fix this, we linearize the quadratic function by increasing the dimension. For $y \in \{1,-1\}^m$, let $x(y) \in \{1,-1\}^{m\times m}$ be defined as follows:
$$ x(y)_{ij} = \begin{cases}
    y_i & \text{if $i=j$}\\
    y_i y_j &\text{if $i \neq j$}
\end{cases}.$$
 Let $n = m^2$ and identify $\{1,-1\}^n$ with $\{1,-1\}^{m \times m}$ in the natural way. Observe that for any $S \subseteq [m]$, $a \in [-m,m]$, there exists a vector $w_{S,a} \in \R^n, b_{S,a} \in \R$ such that 
$$\textstyle
\frac{1}{2} - \left (\sum_{i\in S} y_i -a \right)^2 = \langle w_{S,a},x(y)\rangle - b_{S,a}.
$$

In particular, we can take $b_{S,a} = |S| + a^2 - 1/2$, and $w_{S,a}[i,j] = -1$ if $i \neq j \in [m]$ and $w_{S,a}[i,i] = 2a$. Note that $\|w_{S,a}\|_2 = O(m^{1.5}) = O( n^{3/4})$ and $|b_{S,a}| = O(m^2) = O(n)$. 

In summary, there exists a distribution $\cal{D}$ on $\{1,-1\}^{m \times m}$ such that learning parities over $\{1,-1\}^m$ under the uniform distribution is implied by learning $\cH_{m^2,2m,1}^{O(m^{1.5}), O(m^2)}$ under the distribution $\cal{D}$. The first part of \Cref{th:lb-general} now follows from standard lower bounds for learning parities. 

\paragraph{SQ Hardness} Consider a class of functions, denoted by $C$, that maps $\mathbb{R}^n$ to $\mathbb{R}$, and let $\cD$ be a distribution over $\mathbb{R}^n$.

In the Statistical Query (SQ) model, as described by \cite{Kearns98}, the learner interacts with the data through an SQ oracle. For a bounded query function $\phi: \mathbb{R}^n \times \mathbb{R} \rightarrow [-1, 1]$ and a tolerance $\tau > 0$, the oracle can return any value $v$ such that the absolute difference $\left|v - \mathbb{E}_{x \sim D}[\phi(x, f(x))]\right| \leq \tau$. The goal in SQ learning is to learn an approximation to the unknown concept only using few queries as above with reasonable tolerance. We will use the following classical theorem:

\begin{theorem}[\citep{blum1994weakly}]
Any SQ algorithm for learning the class of parities over $\{1,-1\}^m$ within error $1/3$ under the uniform distribution over the hypercube with tolerance $\tau$ requires $\Omega(2^m \tau^2)$ queries. 
\end{theorem}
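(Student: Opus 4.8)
\textbf{The plan} is to reprove the classical SQ lower bound for parities via the standard Fourier-correlation argument (this is the content attributed to \cite{blum1994weakly}). I will view parities as $\pm1$-valued; the $0/1$ convention in the statement is a cosmetic relabeling that does not affect the argument. The starting point is to decompose, for an arbitrary query $\phi:\{1,-1\}^m\times\{-1,1\}\to[-1,1]$, its expectation against a target $\chi_S$ into a part that does not depend on $S$ plus a single Fourier coefficient. Writing $\phi_\pm(x):=\phi(x,\pm1)$ and $\psi:=(\phi_+-\phi_-)/2$, and using $\mathds{1}\{\chi_S(x)=\pm1\}=(1\pm\chi_S(x))/2$, I get
$$
\E_{x\sim\cU}[\phi(x,\chi_S(x))] ~=~ \E_x\!\left[\tfrac{\phi_++\phi_-}{2}\right] \;+\; \widehat{\psi}(S).
$$
Since $\|\psi\|_\infty\le1$, Parseval gives $\sum_{S\subseteq[m]}\widehat\psi(S)^2=\|\psi\|_2^2\le1$, so for every query $\phi$ the ``bad set'' $\mathcal{B}_\phi:=\{S:|\widehat\psi(S)|>\tau\}$ has $|\mathcal{B}_\phi|<1/\tau^2$.

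Next I would run the learner against the oracle that answers every query $\phi$ with the target-independent value $\E_x[(\phi_++\phi_-)/2]$. By the display this is a legitimate $\tau$-accurate response for every target $\chi_S$ with $S\notin\mathcal{B}_\phi$. Because these answers are independent of $S$, the whole interaction is deterministic: the (possibly adaptive) query sequence $\phi_1,\dots,\phi_q$ and the output hypothesis $h$ are fixed functions, and for every $S$ outside $\mathcal{B}:=\bigcup_{i\le q}\mathcal{B}_{\phi_i}$, a set of size $<q/\tau^2$, the genuine SQ oracle for target $\chi_S$ could have returned exactly this transcript, so the learner outputs this same $h$ on all such targets. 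Finally I would note that a fixed Boolean $h$ can be $\tfrac13$-close to only $O(1)$ parities: $\Pr_x[h(x)\ne\chi_S(x)]\le\tfrac13$ forces $\widehat h(S)=\E[h\chi_S]\ge\tfrac13$, and $\sum_S\widehat h(S)^2=\|h\|_2^2=1$ allows at most $9$ such $S$. (A real-valued $h$ can be rounded first.) Hence the learner fails on all but at most $9$ of the at least $2^m-q/\tau^2$ targets in $\mathcal{B}^c$; for it to succeed on every target we need $q/\tau^2\ge 2^m-9$, i.e.\ $q=\Omega(2^m\tau^2)$.

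\textbf{The main obstacle} is essentially bookkeeping rather than a genuine difficulty: the subtle point is that queries may be chosen adaptively, so one cannot a priori fix the $q$ bad sets in advance. This is handled by the observation in the previous paragraph that against the target-independent oracle the entire run of the learner is deterministic, so the bad sets $\mathcal{B}_{\phi_i}$ are themselves fixed and a single union bound of size $q/\tau^2$ over them suffices.
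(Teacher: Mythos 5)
The paper does not prove this statement; it imports it as a black-box citation to \cite{blum1994weakly}, so there is no internal proof to compare against. Your proposal is a correct and essentially the textbook reconstruction of that classical SQ lower bound: the decomposition $\E_x[\phi(x,\chi_S(x))]=\E_x[(\phi_++\phi_-)/2]+\widehat\psi(S)$, the Parseval bound $|\{S:|\widehat\psi(S)|>\tau\}|<1/\tau^2$, running the learner against the target-independent oracle to fix the transcript, and then observing that a single output $h$ can be $1/3$-close to at most $9$ parities (again by Parseval) all check out, and your treatment of adaptivity via the fixed transcript is the right way to avoid the union-bound pitfall. Two minor polish items if you were to include this formally: (i) the argument as written handles deterministic learners; for a randomized learner you should first fix its internal coin tosses (the bound holds for each fixing, hence in expectation), and (ii) the conclusion $q\ge\tau^2(2^m-9)$ only gives $\Omega(2^m\tau^2)$ when $2^m>9$, which is fine for the asymptotic statement but worth flagging.
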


The first part of \Cref{th:lb-general} follows immediately from the above and the fact that parities on $m$ variables can be computed in $\cH_{m^2, O(m),1}^{O(m^{1.5}), O(m^2)}$ as described. 

\paragraph{Cryptographic Hardness} We sketch the argument here. Following \cite{ChenGKM22}, our starting point will be the \emph{Learning with Rounding} (LWR) problem~\citep{banerjee2012pseudorandom}:

\begin{definition}
    For moduli $p,q \in \mathbb{N}$, $w \in \mathbb{Z}_q^m$, let $f_w:\mathbb{Z}_q^m \rightarrow \mathbb{Z}_p$ by 
    $f_w(y) := (\langle w,y\rangle \text{ mod } q) \mod p.$
\end{definition}

In the $\mathsf{LWR}_{p,q,m}$ problem the \emph{secret} $w \in \mathbb{Z}_q^m$ is drawn uniformly at random and we are given samples of the form $(y,f_w(y))$ where $y$ is uniform over $Z_q^m$. The goal is to output a hypothesis that achieves a small error in predicting the label $f_w(\cdot)$. It is conjectured that there is no $\poly(m,p,q)$ algorithm for $\mathsf{LWR}_{p,q,m}$.

\begin{conjecture}[See \cite{banerjee2012pseudorandom}]
    There is no $\poly(p,q,m)$ run-time algorithm to solve the $\mathsf{LWR}_{p,q,m}$ with probability at least $2/3$ (over the random choice of $w$ and the samples).
\end{conjecture}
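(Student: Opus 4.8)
The statement to be established is the conjectured hardness of $\mathsf{LWR}_{p,q,m}$, and since this is a cryptographic assumption the only meaningful target is a \emph{conditional} argument: I would reduce it to the \emph{Learning with Errors} ($\mathsf{LWE}$) problem, whose presumed hardness is itself supported by Regev's worst-case-to-average-case quantum reduction from lattice problems such as $\mathsf{GapSVP}$ and $\mathsf{SIVP}$. Recall that in $\mathsf{LWE}_{q,m,\chi}$ one receives samples $(y, \langle w, y \rangle + e \bmod q)$ with $y$ uniform over $\mathbb{Z}_q^m$, secret $w \in \mathbb{Z}_q^m$ fixed, and noise $e$ drawn from a narrow distribution $\chi$ of width $\approx \alpha q = \poly(m)$; the task is to recover $w$ (search) or to distinguish these samples from uniform (decision). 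The plan is to show that any $\poly(p,q,m)$-time algorithm solving $\mathsf{LWR}_{p,q,m}$ with probability $2/3$ yields a $\poly$-time algorithm breaking $\mathsf{LWE}_{q,m,\chi}$, contradicting the latter's conjectured hardness.

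First I would observe that \emph{rounding absorbs small additive noise}. Define $\lfloor z \rceil_p := \lfloor (p/q)\, z \rceil \bmod p$. Given an $\mathsf{LWE}$ sample $(y, z)$ with $z = \langle w, y\rangle + e \bmod q$, we have $\lfloor z \rceil_p = f_w(y)$ whenever $\langle w, y\rangle \bmod q$ is not within distance $|e|$ of a multiple of $q/p$; since $y$ is uniform this ``boundary'' event has probability $O(|e|\,p/q)$, which is negligible when the ratio $q/p$ is super-polynomial in the noise width, the parameter regime of \citet{banerjee2012pseudorandom}. Hence, except on a negligible fraction of draws, genuine $\mathsf{LWE}$ samples can be transformed into genuine $\mathsf{LWR}_{p,q,m}$ samples in polynomial time; feeding these to the assumed $\mathsf{LWR}$ solver produces a hypothesis predicting $f_w$ well, from which a standard search-to-decision / amplification argument (in the spirit of the parity-style reductions of \cite{KlivansS06}) recovers $w$, contradicting $\mathsf{LWE}$ hardness and thereby establishing the conjecture under that assumption.

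The main obstacle is the \emph{polynomial modulus} regime demanded by \Cref{th:lb-general} (``LWR with polynomial modulus''): when $q/p$ is only polynomially large, a non-negligible fraction of $\mathsf{LWE}$ samples straddle a rounding boundary and the clean transformation above fails. To handle this I would invoke the refined analyses tailored to exactly this regime, namely the ``lossy mode'' technique of \citet{banerjee2012pseudorandom} combined with the bounded-sample reductions of Alwen--Krenn--Pietrzak--Wichs and of Bogdanov--Guo--Masny--Richelson--Rosen, in which the secret-dependent term is replaced by one drawn from a lossy distribution whose rounded value is statistically already determined, so that the deterministic rounding leaks no more than a fresh noise term would; chaining this with the worst-case lattice reductions completes the conditional argument. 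I should emphasize, as with all \citet{ChenGKM22}-style lattice-based hardness, that this is not an unconditional theorem: the ``proof'' is a chain of reductions bottoming out at the presumed worst-case hardness of lattice problems, which is precisely why the statement is phrased as a conjecture.
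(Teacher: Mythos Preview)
The paper does not prove this statement at all: it is stated as a \emph{conjecture} (a cryptographic assumption), attributed to \cite{banerjee2012pseudorandom}, and then used as a black-box hypothesis from which the second part of \Cref{th:lb-general} is derived. There is no ``paper's own proof'' to compare your proposal against.

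Your write-up goes well beyond what the paper does. You correctly identify that the only sensible target is a conditional argument, and you sketch the Banerjee--Peikert--Rosen reduction from $\mathsf{LWE}$ to $\mathsf{LWR}$ (rounding absorbs small additive noise except near boundaries), flag the polynomial-modulus regime as the delicate case, and point to the subsequent bounded-sample and lossy-mode refinements. All of this is appropriate background for \emph{why} the community believes the conjecture, and you are right to close by noting that the chain bottoms out at worst-case lattice hardness, so nothing here is unconditional. But none of it appears in the paper, which simply adopts the conjecture as an assumption; so in the sense of the comparison you were asked for, your proposal is strictly more than the paper contains rather than different from it.

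One small mismatch worth noting: the paper's $f_w(y)$ is $(\langle w,y\rangle \bmod q)\bmod p$, whereas you work with the scaled rounding $\lfloor (p/q)\langle w,y\rangle\rceil \bmod p$. These are not the same map, though the hardness landscape is morally identical; if you keep your sketch, you should either match the paper's definition or remark that the two formulations are equivalent up to the usual parameter translation.
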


We show that an efficient algorithm for $\cH_{n,s,1}$ functions under arbitrary distributions on the hypercube will contradict this assumption. 

Consider an instance of the $\mathsf{LWR}_{p,q,m}$ problem. First, map $y \in \mathbb{Z}_q^m$ to $ z(y) \in \{1,-1\}^r$ for $r= O(m \log q)$ by considering the binary representation of the integers in $y$. Next, let $\lambda: [q^2 m] \rightarrow [p]$ be such that $\lambda(i) = ( i \mod q) \mod p$. Note that for every $w \in \mathbb{Z}_q^m$, we can find a vector $v(w) \in \mathbb{R}^r$ such that $\langle v(w), z(y)\rangle = \langle w, y\rangle$. Then,
$$f_w(y) = \lambda(\langle v(w), z(y) \rangle).$$

Now, observe that we can write 
$$\textstyle
\lambda(\langle v(w), z(y) \rangle) = \sum_{a \in [q^2 m]} 2 \lambda(a) \relu\left(\frac{1}{2} - \left( \langle v(w), z(y) \rangle - a)\right)^2\right).$$

Note that in the conversion $z(y) \in \{1,-1\}^r$ and $v(w) \in \mathbb{R}^r$. Further, for any input $y$, only one of the ReLUs will be active. However, the above is not quite in $\cH_{n,s,1}$ as we have a quadratic function inside the ReLU. Just as we did for parities, we can fix this issue by linearizing the quadratic form. Let $n = r^2$, and define $x(y) \in \{1,-1\}^{r \times r}$ by setting $x(y)_{ij} = z(y)_i z(y)_j$ if $i \neq j$ and $x(y)_{ii} = z(y)_i$. Then, just as in our argument for parities, there exists a \emph{lifted} weight vector $W_{w,a} \in\{1,-1\}^n$ and $b_{w,a}$ such that 

$$\frac{1}{2} - \left( \langle v(w), z(y) \rangle - a)\right)^2 = \langle W_{w,a}, x(y) \rangle - b_{w,a}.$$

In addition, it is easy to check that $\|W_{w,a}\|_2, |b_{w,a}| = \poly(q,m)$. 
In particular, we get that for every $w \in \mathbb{Z}_q^m$, there exists a function $F_w$ in $\cH_{r^2, O(q^2 m),1}$ such that for every $y \in \mathbb{Z}_q^m$, 
$$f_w(y) = F_w(x(y)),$$
where $x(y) \in \{1,-1\}^{r^2}$ is the embedding as defined above and in showing SQ hardness. The second part of \Cref{th:lb-general} now follows from the conjectured hardness of $\mathsf{LWR}_{p,q,m}$; we omit the minor details.

\section{Learning under General Distributions}
\label{sec:rademacher}

We now show the statistical advantage associated with sparsely activated neural networks over general distributions. In particular, we show that

\begin{theorem}\label{thm:general-dist-upper-bound}
There exists a $(\eps, \delta)$-PAC learning algorithm for any $\cH_{n,s,k}^{W,B}$ with sample complexity $m(\eps, \delta) = O \Big(\frac{(WR+B)^2 k s n \log(\frac{k(R+B)}{\eps}) + \log(\frac 1 \delta)}{\eps^2} \Big)$.
\end{theorem}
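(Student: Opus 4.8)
The plan is to let $\cA$ be empirical risk minimization over the class: given a sample $S$, output the minimizer $\hat h \in \cH_{n,s,k}^{W,B}$ of the empirical squared loss $\cL_S$. Since $\cD$ is $\cH_{n,s,k}^{W,B}$-realizable, $h^\star \in \cH_{n,s,k}^{W,B}$ witnesses $\cL_S(h^\star) = 0$, so $\cL_S(\hat h) = 0$; i.e. $\hat h$ agrees with $h^\star$ on every sampled point. It then suffices to prove a one-sided uniform-convergence statement: with probability $\ge 1-\delta$, every $h \in \cH_{n,s,k}^{W,B}$ with $\cL_S(h) = 0$ has $\cL_\cD(h) \le \eps$. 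I would obtain this by bounding the (empirical) Rademacher complexity $\cR_m(\cH_{n,s,k}^{W,B})$ together with the sup-norm bound $C_\cH := \sup_{h,x} |h(x)| = k(WR+B)$ (here $R = \sqrt n$ is the $\ell_2$-radius of $\{1,-1\}^n$; at most $k$ neurons fire on any input, each contributing at most $WR+B$), then passing to the squared-loss class by the Talagrand contraction lemma (the map $\hy \mapsto \tfrac12(\hy - y)^2$ is $O(C_\cH)$-Lipschitz on the relevant range) and invoking a generalization bound for bounded losses. Because we are in the realizable setting, I would use the sharper Bernstein / optimistic-rate form of this bound — the squared loss satisfies a Bernstein condition since $\ell(h(x), h^\star(x)) = \tfrac12 (h - h^\star)^2(x)$ — which is what lets the $\log(1/\delta)$ contribution appear additively without a large scale prefactor.

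The crux is the Rademacher bound, and this is the only place sparsity enters. Fix a sample $x_1,\dots,x_m$ (each lies in $\mathrm{supp}(\cD)$, so $k$-sparsity holds on it for every $h$ in the class) and Rademacher signs $\sigma$. For $h(x) = \sum_{j=1}^s u_j \relu(\ang{w_j,x} - b_j)$, introduce the activation indicators $a_{ij} := \mathds{1}\{\ang{w_j,x_i} > b_j\}$, so that $h(x_i) = \sum_{j=1}^s a_{ij}\, u_j(\ang{w_j,x_i} - b_j)$ with $\sum_j a_{ij} \le k$ for every $i$, hence $\sum_{i,j} a_{ij} \le km$. Conditioned on a fixed activation pattern $a$, swapping summation order gives
\[
\textstyle\sum_i \sigma_i h(x_i) ~=~ \sum_{j=1}^s u_j \prn{\big\langle w_j, \sum_{i : a_{ij}=1} \sigma_i x_i \big\rangle \;-\; b_j \sum_{i : a_{ij}=1} \sigma_i},
\]
and using $|u_j|\|w_j\| \le W$, $|u_j||b_j| \le B$, then Cauchy--Schwarz over $j$ together with $\E_\sigma \big\|\sum_{i \in G_j} \sigma_i x_i\big\| \le \sqrt{|G_j|}\,R$ (where $G_j := \{i : a_{ij}=1\}$) and $\sum_j |G_j| \le km$, one obtains $\E_\sigma\big[\sup\{\sum_i \sigma_i h(x_i) : h \text{ has pattern } a\}\big] \le O((WR+B)\sqrt{ksm})$. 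Finally, the number of distinct activation patterns on the sample is at most $m^{O(ns)}$ — each of the $s$ neurons realizes one of at most $m^{O(n)}$ halfspace dichotomies of $m$ points, by Sauer--Shelah — so a union bound, using that for each fixed pattern the above quantity concentrates as a bounded-difference function of $\sigma$, adds a term of order $O(C_\cH \sqrt{ns\log m / m})$. Dividing by $m$ gives the bound on $\cR_m(\cH_{n,s,k}^{W,B})$, whose key feature is that the leading term scales like $\sqrt{ks}$ (up to the scale factors and the $\sqrt{ns\log m}$ from the union bound) rather than the $s$ one would get for $\cH_{n,s,s}^{W,B}$ without the sparsity promise; plugging this into the generalization bound and solving for $m$ yields the claimed sample complexity.

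I expect the main obstacle to be the union bound over activation patterns: the per-pattern supremum is a function of $\sigma$ whose bounded-difference coefficients are of order $k(WR+B)$, so one must verify that balancing the $\log\!\big(m^{O(ns)}\big)$ penalty against the much smaller typical value $O((WR+B)\sqrt{ksm})$ does not spoil the rate — i.e. that the $ns$ parameter count enters only once and only under a square root. A secondary technical point is the realizable-setting generalization bound: one needs the Bernstein / local-Rademacher refinement (rather than plain symmetrization) to get the precise dependence on $W,B,\eps,\delta$ in the statement; establishing the Bernstein condition for the squared loss in the realizable case and the contraction step are routine. It is also worth sanity-checking the degenerate regime where $\cD$ is supported on very few points of $\{1,-1\}^n$, where the Sauer--Shelah count on patterns can only help.
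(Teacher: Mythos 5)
Your overall approach is the same as the paper's: both proofs reduce to bounding the empirical Rademacher complexity of $\cH^{W,B}_{n,s,k}$, both condition on the activation pattern (your indicators $a_{ij}$ are the paper's sets $I_j = \{i : \ang{w_j,x_i} - b_j \ge 0\}$), both use Sauer--Shelah to count the possible halfspace dichotomies, and both exploit $k$-sparsity through $\sum_j |I_j| \le km$ and Cauchy--Schwarz to get a $\sqrt{ks}$-type dependence. The paper then feeds this into the standard Rademacher generalization bound with contraction (its \Cref{lem:rademacher-to-sample-complexity}); it does \emph{not} invoke a Bernstein/optimistic-rate refinement, so that part of your plan is unnecessary for the stated bound.

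The place where you and the paper genuinely diverge is the organization of the union bound, and as written yours loses a factor of $\sqrt{k}$. You propose: fix a pattern $a$, bound $\E_\sigma[\sup_{h \sim a}\sum_i\sigma_i h(x_i)]$ by $O((WR+B)\sqrt{ksm})$, then union bound over the $m^{O(ns)}$ patterns using McDiarmid. But the McDiarmid sensitivity of the per-pattern supremum in each $\sigma_i$ is $2\sup|h(x_i)| = \Theta(k(WR+B))$, so the union-bound term comes out as $O\big(k(WR+B)\sqrt{mns\log m}\big)$, which after dividing by $m$ dominates your conditional-expectation term and is a factor $\sqrt{k}$ worse than the paper's $O\big((WR+B)\sqrt{nsk\log m/m}\big)$. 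This is exactly the obstacle you flag in your last paragraph, and it is a real one, not just something to ``verify.'' The paper avoids it by never applying concentration to the full supremum: instead it applies sub-Gaussian concentration coordinatewise to the individual vectors $\sum_{i\in I_j}\zeta_i x_i$ and scalars $\sum_{i\in I_j}\zeta_i$, uniformly over the $O(m^n)$ halfspace-subsets $I$ (and $n$ coordinates), which only needs $t \sim \sqrt{n\log m}$. After that, $k$ enters the bound solely via $\sum_j\sqrt{|I_j|} \le \sqrt{skm}$, i.e., with exponent $1/2$. If you reorganize your union bound this way --- concentrate per halfspace-subset rather than per full pattern, and keep the supremum over $(u_j,w_j,b_j)$ outside of the concentration step --- your argument matches the paper's and recovers the claimed rate.
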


This result even holds in a more general setting where the input space
$\cX \subset \R^n$ and $\|x\| \leq R$ for all $x \in \cX$. To begin with we will again consider the class of $1$-sparsely activated networks, i.e., $\cH^{W,B}_{n,s,1}$. We will discuss extensions to $\cH^{W,B}_{n,s,k}$ towards the end of the section.

We use Rademacher complexity to establish the bound in Theorem~\ref{thm:general-dist-upper-bound}. Given a set of examples $S = \{x_1, x_2, \ldots, x_m\}$ the empirical Rademacher complexity \citep{shalev2014understanding} is defined as $\cR_{\cH}(S) \textstyle~:=~ \mathbb{E}_\zeta \sq{\max_{h \in \cH} \frac{1}{m} \sum_{i=1}^m \zeta_i h(x_i)}$,
where $\zeta_1, \ldots, \zeta_m$ are $\{-1,+1\}$ valued Rademacher random variables.
For $\cH$, let $C_{\cH} := \sup_{h \in \cH, x \in \cX} h(x)$.

\begin{lemma}[see \cite{shalev2014understanding}]\label{lem:rademacher-to-sample-complexity}
For any class $\cH$ mapping $\cX$ to $\R$, there exists an $(\eps, \delta)$-PAC learning algorithm for $\cH$ with sample complexity $m(\eps, \delta)$ equal to the smallest $m$ such that for a large enough constant $c$, it holds that\pritish{I am combining Lemmas 26.5 and 26.9 from \cite{shalev2014understanding}, and using that square loss is $O(C_\cH)$-Lipschitz.}
\[\textstyle
c \cdot \prn{C_{\cH} \mathbb{E}_S [\cR_{\cH}(S)] + \sqrt{\frac{\log(1/\delta)}{m}}} \le \eps\,.
\]
\end{lemma}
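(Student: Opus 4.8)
The plan is to take the learning algorithm to be empirical risk minimization (ERM) over $\cH$ and to control its generalization error by a Rademacher bound applied to the \emph{loss class} rather than to $\cH$ directly. Set $\cA(S) := \hat h \in \arg\min_{h \in \cH} \cL_S(h)$. Since $\cD$ is $\cH$-realizable there is $h^\star \in \cH$ with $y = h^\star(x)$ with probability $1$, so $\cL_S(h^\star) = 0$ and hence $\cL_S(\hat h) = 0$; it therefore suffices to bound $\cL_\cD(\hat h) = \cL_\cD(\hat h) - \cL_S(\hat h)$. Realizability also lets us assume $|y| \le C_\cH$ almost surely, while $|h(x)| \le C_\cH$ for all $h \in \cH$ and $x \in \cX$ by definition of $C_\cH$.

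First I would pass to the loss class $\tilde\cH := \{\, x \mapsto \ell(h(x), h^\star(x)) : h \in \cH \,\}$, which (using realizability to substitute $y = h^\star(x)$) is a family of functions of $x$ alone — this is what makes the paper's $x$-only definition of $\cR_\cH(S)$ the right object. Every $g \in \tilde\cH$ takes values in $[0, 2C_\cH^2]$ because $|h(x) - h^\star(x)| \le 2C_\cH$. For each $i$ the map $\phi_i : u \mapsto \tfrac12 (u - h^\star(x_i))^2$ is $2C_\cH$-Lipschitz on $[-C_\cH, C_\cH]$, so the contraction lemma (\cite{shalev2014understanding}, Lemma 26.9) applied coordinatewise gives, for every sample $S = \{x_1, \dots, x_m\}$,
\[
\cR_{\tilde\cH}(S) \;=\; \E_\zeta \Bigg[ \sup_{h \in \cH} \tfrac1m \sum_{i=1}^m \zeta_i\, \phi_i(h(x_i)) \Bigg] \;\le\; 2C_\cH \cdot \cR_\cH(S).
\]

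Next I would invoke the standard symmetrization-plus-bounded-differences generalization bound (\cite{shalev2014understanding}, Lemma 26.5): since every $g \in \tilde\cH$ is bounded by $O(C_\cH^2)$, with probability at least $1-\delta$ over $S \sim \cD^m$, simultaneously for all $h \in \cH$,
\[
\cL_\cD(h) \;\le\; \cL_S(h) \;+\; 2\,\E_S[\cR_{\tilde\cH}(S)] \;+\; O\!\left( C_\cH^2 \sqrt{\tfrac{\log(1/\delta)}{m}} \right).
\]
Specializing to $h = \hat h$, using $\cL_S(\hat h) = 0$ and the contraction bound, yields $\cL_\cD(\hat h) \le O(C_\cH\, \E_S[\cR_\cH(S)]) + O(C_\cH^2 \sqrt{\log(1/\delta)/m})$, which is at most $\eps$ exactly when $m$ meets the threshold in the statement (with the leading $C_\cH$ there understood as a stand-in for the relevant fixed power of $C_\cH$; in every application $C_\cH \ge 1$, so only polynomial dependence on $C_\cH$ matters). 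This establishes the claimed $(\eps,\delta)$-PAC guarantee.

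There is no deep obstacle — the lemma is a packaging of textbook results — and the only points that need care are in the reduction to the loss class: (i) use realizability to express the loss as a function of $x$ only, so that $\cR_\cH(S)$ as defined (over a sample of inputs, with no absolute value) is the right quantity; (ii) track the boundedness ($O(C_\cH^2)$) and Lipschitz ($O(C_\cH)$) constants of the squared loss on the range $[-C_\cH, C_\cH]$ that is actually available, which is where the powers of $C_\cH$ enter; and (iii) note that the contraction lemma does apply to the one-sided Rademacher complexity used here (it does not require the $\phi_i$ to vanish at $0$).
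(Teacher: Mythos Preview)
Your proposal is correct and matches the paper's approach exactly: the paper does not give a standalone proof but only the parenthetical remark that the lemma follows by combining Lemmas~26.5 and~26.9 of \cite{shalev2014understanding} together with the fact that the squared loss is $O(C_\cH)$-Lipschitz on the relevant range, which is precisely the route you spell out (ERM, contraction on the loss class, then the bounded-differences generalization bound). Your observation that the confidence term carries an extra $C_\cH^2$ factor relative to the displayed statement is also accurate; the paper's formulation is slightly loose here, and as you note this only affects polynomial dependence on $C_\cH$.
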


\Cref{thm:general-dist-upper-bound} will follow from bounding the Rademacher complexity $\cR_{\cH}(S)$.
%
Recall that in the absence of any sparsity assumption, existing results \citep{anthony1999neural} on the Rademacher complexity of $1$-hidden layer ReLU networks with input dimensionality $n$ and $s$ hidden units lead to a bound of $\frac{(WR+B) {s}}{\sqrt{m}}$.\footnote{Better bounds are possible under stronger assumptions on the network weights \citep{wei2018margin}.}\pritish{Double check that these bounds apply under our updated scaling.} We will show that the main statistical advantage that comes from sparsity is that the dependence on the number of hidden units $s$ can be made sub-linear, albeit at the expense of an explicit dependence on the input dimensionality $n$. In particular we will prove the following theorem.

\begin{theorem}\label{thm:rademacher-1-sparse-bound-2}
  It holds that
  \begin{align}
      \cR_{\cH^{W,B}_{n,s,1}}(S) &
      ~\leq~ \frac{(WR+B)\sqrt{sn \log(m (R+B))}}{\sqrt{m}}.
  \end{align}
\end{theorem}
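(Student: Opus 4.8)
The plan is to bound $\cR_{\cH^{W,B}_{n,s,1}}(S)$ by a sup‑norm covering argument, using $1$‑sparsity twice: once to keep the \emph{range} of the hypotheses on $S$ small, and once to keep the cover small. First, on any $x$ in the support at most one hidden unit is active, so any $h = \sum_{j} u_j \relu(\ang{w_j,x} - b_j) \in \cH^{W,B}_{n,s,1}$ has $|h(x)| \le |u_j|\,\|w_j\|\,\|x\| + |u_j|\,|b_j| \le WR + B$ for the (unique) active $j$, and $h(x) = 0$ if none is active; hence $C_{\cH^{W,B}_{n,s,1}} \le WR+B$, valid for any $\cX \subseteq \{x \in \R^n : \|x\|\le R\}$.

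Next I would construct, for a fixed sample $S = \{x_1,\dots,x_m\}$, a finite family $\cF$ of functions $S \to \R$ that is a $(1/m)$‑cover of $\cH^{W,B}_{n,s,1}\big|_S$ in $\ell_\infty(S)$, whose members have range $O(WR+B)$ on $S$, and with $\log|\cF| = O\big(ns\log(m(WR+B))\big)$. A member of $\cF$ is specified by an \emph{activation pattern} $\pi : [m] \to [s]\cup\{\bot\}$ together with, for each nonempty group $I_j := \pi^{-1}(j)$, a triple $(\hat u_j,\hat w_j,\hat b_j)$ from a fixed finite grid, and it evaluates to $\hat u_{\pi(i)}\relu(\ang{\hat w_{\pi(i)},x_i} - \hat b_{\pi(i)})$ on $x_i$ (and to $0$ if $\pi(i)=\bot$). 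The patterns realizable by actual hypotheses number at most $(m+1)^{(n+1)s}$: the pattern is a function of the $s$ halfspace labelings $\big(\mathds{1}\{\ang{w_j,x_i} > b_j\}\big)_{i}$, $j \in [s]$, and by Sauer--Shelah each such labeling takes at most $(m+1)^{n+1}$ values since halfspaces with bias on $\R^n$ have VC dimension $n+1$. For a single group of size at most $m$, a grid of size $\poly(m,W,R,B)^{\,n+1}$ suffices to approximate $u\relu(\ang{w,x_i}-b)$ within $1/m$ for all $\|x_i\|\le R$, since this map is Lipschitz in $(uw,\, ub)$ on the ball $\{\|uw\|\le W,\ |ub|\le B\}$. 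Multiplying over the (at most $\min(s,m)$) nonempty groups and over patterns gives $\log|\cF| = O(ns\log m) + O\big(ns\log\poly(m,W,R,B)\big) = O\big(ns\log(m(WR+B))\big)$. Restricting $\cF$ to those members lying within $1/m$ of some hypothesis keeps it a valid cover and bounds the range of its members by $WR+B+1 = O(WR+B)$; the key point is that these cover elements are themselves $1$‑sparse on $S$, so their range is not inflated by a factor of $s$.

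Finally, I would apply the standard finite‑class (Massart) Rademacher bound to $\cF$: $\cR_\cF(S) = O\big((WR+B)\sqrt{\log|\cF|/m}\,\big) = O\big((WR+B)\sqrt{ns\log(m(WR+B))/m}\,\big)$, and since $\cF$ is a $(1/m)$‑cover, $\cR_{\cH^{W,B}_{n,s,1}}(S) \le \cR_\cF(S) + 1/m$, which yields the claimed bound (the dependence on $W$ inside the logarithm being absorbed, or removed altogether by discretizing to relative precision $(WR+B)/m$).

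\textbf{Main obstacle.} The one real subtlety is to arrange the cover so that its elements remain $1$‑sparse on $S$, so that their range is $O(WR+B)$ rather than $O(s(WR+B))$. This is exactly where sparsity is essential: it is what makes the covering bound — whose cover has complexity of order $ns$ — actually improve on the naive $O((WR+B)s/\sqrt m)$, since without it the range inflates by a factor $s$ and nothing is gained. Counting the realizable patterns via Sauer--Shelah and discretizing a single ReLU unit are routine. The same scheme extends to $\cH^{W,B}_{n,s,k}$ with patterns $\pi : [m] \to \binom{[s]}{\le k}$, cover elements that are sums of at most $k$ ReLU units per point, and $C_\cH \le k(WR+B)$, which is what drives Theorem~\ref{thm:general-dist-upper-bound}.
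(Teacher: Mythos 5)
Your argument is correct and reaches the stated bound, but it takes a genuinely different route from the paper. The paper works directly inside the Rademacher expectation: it defines $I_j$ as the set of examples falling in neuron $j$'s halfspace, invokes Sauer--Shelah to say there are only $O(m^n)$ possible such subsets, decomposes the ReLU as $u_j\ang{w_j,x_i}-u_jb_j$ on $I_j$, and then controls $\sum_{i\in I_j}\zeta_i$ and the per-coordinate sums $\sum_{i\in I_j}\zeta_i x_{i,p}$ by sub-Gaussian concentration together with a union bound over those $O(m^n)$ subsets and the $n$ coordinates. The $1$-sparsity enters through $\sum_j |I_j|\le m$, which via Cauchy--Schwarz gives $\sum_j\sqrt{|I_j|}\le\sqrt{sm}$ and thus the $\sqrt{s}$ rather than $s$ dependence. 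Your route — an $\ell_\infty(S)$ cover indexed by an activation pattern $\pi:[m]\to[s]\cup\{\bot\}$ plus a per-group parameter grid, with cover elements that remain single-ReLU on each $x_i$ so their sup-norm stays $O(WR+B)$, finished by Massart — uses the same two ingredients (Sauer--Shelah to count halfspace-induced patterns, and sparsity to avoid the factor $s$), but routes the sparsity through the \emph{range} of the cover rather than through $\sum_j\sqrt{|I_j|}$, and routes the sub-Gaussian step through Massart rather than ad hoc union bounds. The result is more modular, and as you note the relative-precision discretization even drops $W$ (and in fact $R+B$) from inside the logarithm, giving an $O\bigl((WR+B)\sqrt{ns\log m/m}\bigr)$ bound that is marginally cleaner than the paper's $\sqrt{\log(m(R+B))}$. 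The one genuinely delicate point is exactly the one you flag — keeping the cover elements $1$-sparse on $S$ so that Massart sees a range of $O(WR+B)$ rather than $O(s(WR+B))$ — and your construction handles it correctly. The $k$-sparse extension via $\pi:[m]\to\binom{[s]}{\le k}$ also matches the paper's extension, which replaces $\sum_j|I_j|\le m$ by $\sum_j|I_j|\le km$.
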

\begin{proof}
    For a given hypothesis $u,w_1, \ldots, w_s \in \cH^{W,B}_{n,s,1}$ and for any $j \in [s]$, let $I_j$ be the subset of the $m$ examples that activate neuron $j$, i.e., $I_j = \{i \in [m]: \ang{w_j, x_i} - b_j \geq 0\}$. Since each $I_j$ is determined by a halfspace in $n$ dimensions, by the Sauer-Shelah lemma~\citep{shalev2014understanding} there can be at most $O(m^n)$ such subsets.

Next, we have
\begin{align}
      \cR_{\cH^{W,B}_{n,s,1}}(S) &\textstyle~:=~ \mathbb{E}_\zeta \sq{\max_{u, w_1, \ldots, w_s \in \cH^{W,B}_{n,s,1}} \frac{1}{m} \sum_{i=1}^m \zeta_i \sum_{j=1}^s u_j \relu(\ang{w_j, x_i} - b_j)}\\
      &\textstyle~=~ \mathbb{E}_\zeta \sq{\max_{u, w_1, \ldots, w_s \in \cH^{W,B}_{n,s,1}} \frac{1}{m} \sum_{j=1}^s \sum_{i \in I_j} \zeta_i u_j (\ang{w_j, x_i} - b_j)}\\
      &\textstyle~\leq~ \mathbb{E}_\zeta \sq{\max_{u, w_1, \ldots, w_s \in \cH^{W,B}_{n,s,1}} \frac{1}{m} \sum_{j=1}^s \sum_{i \in I_j} \zeta_i u_j \ang{w_j, x_i}}\nonumber\\
      &\textstyle~~~~+~ \mathbb{E}_\zeta \sq{\max_{u, w_1, \ldots, w_s \in \cH^{W,B}_{n,s,1}} \frac{1}{m} \sum_{j=1}^s \sum_{x_i \in I_j} \zeta_i u_j b_j}
\end{align}

We will bound the above two terms separately via standard concentration inequalities. For the second term note that for any fixed $I_j$, the random variable $\sum_{i \in I_j} \zeta_i$ is sub-Gaussian with norm $O(\sqrt{|I_j|})$. Hence we for any fixed $I_j$ the following holds \citep{vershynin2018high}
\begin{align}\textstyle
    \mathbb{P} \left[\left|\sum_{i \in I_j} \zeta_i\right| > t\sqrt{|I_j|} \right] \leq 2 e^{-\frac{t^2}{c }},
\end{align}
where $c > 0$ is an absolute constant. Via the union bound we get that with probability at least $1-O(m^n e^{-t^2/c})$, all sets $I_j$ simultaneously satisfy the above inequality. 

Hence we get the following bound on the second term.
\begin{align}
\mathbb{E}_\zeta \sq{\max_{u, w_1, \ldots, w_s \in \cH^{W,B}_{n,s,1}} \frac{1}{m} \sum_{j=1}^s \sum_{x_i \in I_j} \zeta_i u_j b_j} &~\leq~ \frac{1}{m} \sum_{j=1}^s t|u_j b_j| \sqrt{|I_j|} + O(m^n e^{-t^2/c}) \frac{1}{m} \sum_{j=1}^s |u_j b_j| |I_j|.
\end{align}

From the fact that the activations are $1$-sparse we get that $\sum_{j=1}^s|I_j| = m$. This implies that $\sum_{j=1}^s \sqrt{|I_j|} \le \sqrt{sm}$. Furthermore, using the fact that $\max_j |u_j b_j| \leq B$ we get
\begin{align}
    \mathbb{E}_\zeta \sq{\max_{u, w_1, \ldots, w_s \in \cH^{W,B}_{n,s,1}} \frac{1}{{m}} \sum_{j=1}^s \sum_{x_i \in I_j} \zeta_i u_j b_j} &~\leq~ \frac{1}{\sqrt{m}} t B \sqrt{s} + O(m^n e^{-t^2/c}) B.
\end{align}
Setting $t = 2\sqrt{nc \log(mB)}$ we get that the second term is bounded by
\begin{align}
    \mathbb{E}_\zeta \sq{\max_{u, w_1, \ldots, w_s \in \cH^{W,B}_{n,s,1}} \frac{1}{m} \sum_{j=1}^s \sum_{x_i \in I_j} \zeta_i u_j b_j} &~\leq~ \frac{4B \sqrt{s n c \log(m B)}}{\sqrt{m}}.
\end{align}

Similarly, we next bound the first term. Note that for any fixed $I_j$, and any coordinate $p \in [n]$, sub-Gaussian concentration~\citep{vershynin2018high} implies that
\begin{align}
    \mathbb{P} \left(\Big|\sum_{i \in I_j} \zeta_i x_{i,p}\Big| > t\sqrt{\sum_{i \in I_j} x^2_{i,p}} \right) \leq 2 e^{-\frac{t^2}{c}}.
\end{align}
\end{proof}
Via a union bound over all the $n$ coordinates and all possible subsets $I_j$ we get that with probability at least $1-2n m^n e^{-\frac{t^2}{c}}$, all sets $I_j$ simultaneously satisfy
\begin{align}
    \left\|\sum_{i \in I_j} \zeta_i x_i\right\| \leq tR\sqrt{|I_j|}.
\end{align}

Using the above we can bound the first term as
\begin{align}
    \mathbb{E}_\zeta \sq{\max_{u, w_1, \ldots, w_s \in \cH^{W,B}_{n,s,1}} \frac{1}{m} \sum_{j=1}^s \sum_{x_i \in I_j} \zeta_i u_j \ang{w_j, x_i}} &~\leq~ \mathbb{E}_\zeta \sq{\max_{u, w_1, \ldots, w_s \in \cH^{W,B}_{n,s,1}} \frac{1}{m} \sum_{j=1}^s \ang{u_j w_j, \sum_{i \in I_j} \zeta_i x_i}}\\
    &~\leq~ \frac{1}{m} \sum_{j=1}^s |u_j|\|w_j\| \mathbb{E}_\zeta \sq{\left\|\sum_{i \in I_j} \zeta_i x_i\right\|}\\
    &~\leq~ \frac{W}{m} \sum_{j=1}^s \big( tR\sqrt{ |I_j|} + 2n m^n e^{-\frac{t^2}{c}} R |I_j| \big).
\end{align}

Recall from above that $\sum_{j=1}^s \sqrt{|I_j|} \le \sqrt{sm}$. Furthermore, setting $t =2\sqrt{n \log(mR)}$ we get that the first term is bounded by 
\begin{align}
    \mathbb{E}_\zeta \sq{\max_{u, w_1, \ldots, w_s \in \cH^{W,B}_{n,s,1}} \frac{1}{m} \sum_{j=1}^s \sum_{i \in I_j} \zeta_i u_j \ang{w_j, x_i}} &~\leq~ 4 \frac{WR \sqrt{n s  \log(mR)}}{\sqrt{m}}.
\end{align}

Combining the bounds for the first and the second terms, we get the desired claim.

\paragraph{\boldmath Generalization to $k$-sparsely activated networks.}
The above analysis extends in a straightforward manner to the class $\cH_{n,s,k}$, i.e., the class of networks where each input activates at most $k$ hidden units. 

To extend the bound in Theorem~\ref{thm:rademacher-1-sparse-bound-2} we note that using the fact that $k$-sparsity implies that $\sum_{j \in [s]} |I_j| \leq k m$ we get that
  \begin{align}
      \cR_{\cH^{W,B}_{n,s,k}}(S) &
      ~\leq~ \frac{(WR+B)\sqrt{snk \log(km (R+B))}}{\sqrt{m}}.
  \end{align}
Note that in contrast to the classical bounds on Rademacher complexity of general norm bounded $1$-layer neural networks the bound in Theorem~\ref{thm:rademacher-1-sparse-bound-2} above has a sub-linear dependence on $s$. 
However we incur an explicit dependency on the input dimensionality.

We suspect that this is a limitation of our proof technique and conjecture that the right dependence should not have any explicit dependence on the input dimension $n$.

\begin{conjecture}
    The class $\cH^{W,B}_{n,s,k}$ of $k$-sparsely activated neural networks satisfies
    \begin{align}
        \cR_{\cH^{W,B}_{n,s,k}}(S) &\textstyle \leq \frac{(WR+B) \sqrt{sk}}{\sqrt{m}}.
    \end{align}
\end{conjecture}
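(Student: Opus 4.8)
The plan is to revisit the proof of Theorem~\ref{thm:rademacher-1-sparse-bound-2} and remove the only place where the ambient dimension $n$ enters: the Sauer--Shelah union bound over the $O(m^n)$ halfspace cells. Keeping the notation of that proof, for $h\in\cH^{W,B}_{n,s,k}$ with active sets $I_j=\{i:\ang{w_j,x_i}> b_j\}$ (so $\sum_j|I_j|\le km$ by $k$-sparsity), one has the pointwise bound
\[
\frac1m\sum_{i=1}^m \zeta_i h(x_i) ~\le~ \frac1m\sum_{j=1}^s\left(W\left\|\sum_{i\in I_j}\zeta_i x_i\right\| + B\left|\sum_{i\in I_j}\zeta_i\right|\right),
\]
so it suffices to bound $\E_\zeta$ of the maximum of the right-hand side over all families $\{I_j\}_{j\in[s]}$ of halfspace cells with $\sum_j|I_j|\le km$. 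For any \emph{fixed} such family, $\E_\zeta\|\sum_{i\in I_j}\zeta_i x_i\|\le R\sqrt{|I_j|}$, $\E_\zeta|\sum_{i\in I_j}\zeta_i|\le\sqrt{|I_j|}$, and $\sum_j\sqrt{|I_j|}\le\sqrt{skm}$ by Cauchy--Schwarz, which already yields the conjectured bound $(WR+B)\sqrt{sk}/\sqrt m$ after dividing by $m$. Hence the entire difficulty is the \emph{adaptive} choice of $\{I_j\}$ after $\zeta$ has been revealed.

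To handle this adaptivity dimension-free, I would replace the union bound over cells by a chaining argument on the relevant Rademacher processes. Restricting to a dyadic size profile $|I_j|\approx 2^{\ell_j}$ (only $O(\log m)$ scales matter) and applying a Dudley entropy-integral bound, for each scale $\ell$, to the process $\{\sum_{i\in I}\zeta_i x_i : I\text{ a halfspace cell},\ |I|\le 2^\ell\}$, one reduces the claim to an $L_2$-metric-entropy estimate for such truncated sums. The key point to establish is that this entropy is controlled by $I$ \emph{as a subset of the fixed sample} --- equivalently by the empirical Gram matrix $\sum_{i\in I}x_i x_i^\top$, whose effective rank on the sample is at most $|I|$ --- rather than by the $n$-dimensional direction $w_j$ defining it, so that the covering number at scale $\epsilon R\sqrt{|I|}$ grows like $\poly(m,1/\epsilon)$ with an exponent independent of $n$. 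An alternative and perhaps cleaner route is to bound the $L_2(S)$-covering number of the whole class $\cH^{W,B}_{n,s,k}$ by $\exp(\tilde O(sk(WR+B)^2/\epsilon^2))$ with no dimension dependence --- using that every $h$ is $(WR+B)$-bounded and is, on the sample, a sum of at most $sk$ active affine pieces --- and then invoke Dudley's theorem together with \Cref{lem:rademacher-to-sample-complexity}.

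The main obstacle is precisely this dimension-free metric-entropy (or combinatorial) estimate. A VC count of halfspace cells unavoidably costs a factor $n\log m$ inside the square root; to beat it one must exploit that the Rademacher signs are statistically independent of the geometry of $\{x_i\}$, so that the families $\{I_j\}$ an adversary would most like to use --- for instance grouping indices by the sign of $\zeta_i$, which would inflate the bias term to order $Bm$ --- are not realizable by halfspaces, while the realizable ones behave ``random-like'' against $\zeta$. Turning this heuristic into a quantitative bound without reintroducing a $\poly(n)$ factor --- say via an Efron--Stein / second-moment control of $\E_\zeta\max_{\{I_j\}}(\cdots)$, or a Talagrand majorizing-measure argument for the process indexed by halfspace families --- is where I expect the real work to lie, and it may well require a genuinely new ingredient beyond the tools used elsewhere in the paper.
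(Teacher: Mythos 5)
This statement is stated in the paper as an \emph{open conjecture}, not a theorem; the paper provides no proof of it. The paper only proves \Cref{thm:rademacher-1-sparse-bound-2} and its $k$-sparse extension, which carry the extra factor of $\sqrt{n\log(\cdot)}$ coming from the Sauer--Shelah union bound over halfspace-induced cells, and then the authors explicitly say they suspect this $n$-dependence is an artefact of their technique. So there is no ``paper's own proof'' to compare against.

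That said, your discussion is a correct and useful analysis rather than a proof, and you are upfront about that. The pointwise decomposition $\frac1m\sum_i\zeta_i h(x_i)\le\frac1m\sum_j\bigl(W\|\sum_{i\in I_j}\zeta_i x_i\|+B|\sum_{i\in I_j}\zeta_i|\bigr)$ is valid, the fixed-family bound via $\E_\zeta\|\sum_{i\in I_j}\zeta_i x_i\|\le R\sqrt{|I_j|}$ and Cauchy--Schwarz $\sum_j\sqrt{|I_j|}\le\sqrt{skm}$ correctly reproduces the conjectured rate $(WR+B)\sqrt{sk}/\sqrt m$, and you correctly identify that the entire difficulty is the adaptive $\max$ over cell families after $\zeta$ is revealed --- exactly the step the paper handles with the dimension-costly $O(m^n)$ union bound. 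Your proposed remedies (chaining on the truncated Rademacher process, or a dimension-free $L_2(S)$ covering bound for $\cH^{W,B}_{n,s,k}$ itself) are plausible directions, but as you note, the key metric-entropy estimate that would make either work is precisely what is missing, and it is not clear that it holds: nothing in the argument so far rules out an $n$-dependent lower bound on the Rademacher complexity, in which case the conjecture would be false as stated. In short, you have not proved the conjecture, you have not refuted it, and you have not gone beyond the paper --- but your diagnosis of where the $n$ enters and what a proof would have to overcome matches the authors' own remarks and is accurate.
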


\section{Discussion \& Future Directions}\label{sec:conclusion}

Motivated by the empirical phenomenon of activation sparsity in MLP layers of large transformer models, in this work we proposed and studied the problem of PAC learning the class of sparsely activated neural networks. This is a novel concept class with many interesting properties. The form of input-dependent sparsity present in this class of functions makes it distinct from the typical sparse function classes studied in literature. The main conceptual insight from our work is that despite the empirical challenges in leveraging sparsity, activation sparsity can provably provide both computational and statistical benefits. 

Several open questions come out of our work. While we provide algorithms with near optimal running time for the case of the uniform distribution, it would be interesting to design learning algorithms under arbitrary distributions that are provably better than the $O((n s)^n)$-time algorithms that exist for general $1$-layer ReLU networks \citep{goel2020statistical}. As mentioned in \Cref{sec:rademacher} we strongly suspect that the dependence on the input dimension $n$ in the Rademacher complexity bound of \Cref{thm:rademacher-1-sparse-bound-2} is suboptimal. While we primarily considered networks that are sparsely activated for all inputs, it might be interesting to also consider sparsely activated with high probability over input distributions, as we briefly alluded to in \Cref{rem:sparse-with-high-prob} although in that case, the probability of not being sparsely activated was very small.
Finally, it would be interesting to explore practical algorithms for leveraging sparsity based on our theoretical insights.

\acks{We thank anonymous reviewers for their comments that helped improve the presentation.}

\newpage
\bibliography{refs}
\newpage

\appendix

\section{Example of a Sparsely Activated Network without Weight Sparsity}
\label{apx:san-without-deadsparsity}
There are interesting functions (beyond juntas/parities) that are sparsely activated but do not have weight sparsity. E.g.: suppose $\log_2 s < n$. Consider $b = \log_2 s$, $q=n-b$, and look at $F:\{-1,1\}^b \times \{-1,1\}^{q} \rightarrow \mathbb{R}$, of the form $\sum_{\alpha \in \{1,-1\}^b} \sigma(\langle w_{\alpha}, y \rangle + \Gamma \cdot ( \langle x, \alpha \rangle - b))$, where the input is $(x, y)$. When $\Gamma = \sqrt{q}$, this network is $1$-sparsely activated for all inputs, and when $\Gamma = \Theta(\sqrt{\log s})$, the function is $1$-sparse with probability $1-1/\poly(s)$ under the uniform distribution on $\{-1, 1\}^{b+q}$. Remark~\ref{rem:sparse-with-high-prob} shows that our results continue to hold in such a setting. Intuitively, such functions are similar to Indexing; they return the function $\sigma(\langle w_x, y\rangle)$ for all (or most) of the input space, where $w_x$ can depend arbitrarily on the $x$ part of the input.

\section{Proof of \Cref{lem:as-to-ns-generalk}}
\label{apx:as-to-ns}
\begin{proofof}[\Cref{lem:as-to-ns-generalk}]
    Given a $\rho \in [-1, 1]$, let $r = \lfloor 2/(1-\rho) \rfloor$. We describe an alternate way to sample $(x, N_\rho(x))$. First sample $z \in \{\pm 1\}^n$ uniformly at random and partition the $n$ coordinates of $z$ into the $r$ buckets $\{A_e \subseteq [n]\}_{e=1}^r$ at random (each coordinate is included in exactly one of these buckets uniformly and independently). For each $A_e$, sample $v_e \in \{\pm 1\}$ uniformly at random. Multiply the coordinates of $A_e$ by $v_e$ and concatenate all the buckets to get $x$. Choose one bucket $b$ at random and flip $v_b$ to get $v^{\oplus b}$. Multiply the coordinates of $A_e$ by $v^{\oplus b}$ to get $y$. Observe that $(x,y)$ are distributed exactly the same as $(x,N_\rho(x))$. 
 Now, given
 $h(x) = \sum_{j=1}^s u_j \relu(\ang{w_j, x} - b_j)$, define
 \begin{align*}
     H_z(v) = \sum_{j=1}^s u_j\relu(\ang{w_j', v}-b_j),
 \end{align*}
 where $w_{je}' = \sum_{l \in A_e} w_{jl}z_l$.
 Clearly $h(x) = H_z(v)$. Hence,
 \begin{align}\label{eq:astons}
     \NS_{\rho}(h) &= \E[(h(x) - h(y))^2]\nonumber\\
     &= \frac{1}{r}\E_{z,\{A_e\}}\left(\sum_{b=1}^r \E_v (H_z(v) - H_z(v^{\oplus b}))^2\right)\nonumber\\
     &=\frac{1}{r} \E_{z, \{A_e\}}[\AS(H_z)].
 \end{align}
 From \Cref{lem:avg-sens-generalk},
 \begin{align*}
     &\AS(H_z) \le O\left(k^4W'^2\sqrt{r}\log (rs) + k^3 B^2 \sqrt{\log s}\right),\\
     &\text{where } W' := \max_{j \in [s]} |u_j| \cdot \|w_j\|.
 \end{align*}
 To bound $W'$ we need to bound
 \begin{align*}\textstyle
     \max_{j \in [s]}\|w_j'\|^2_2 = \max_{j \in [s]} \sum_{e=1}^r\left(\sum_{i \in A_e}w_{ji}z_i \right)^2.
 \end{align*}
 For any $j \in [s]$, we have from measure concentration
\begin{align*}
    &\textstyle\Pr_z \left[\left|\sum_{i \in A_e} w_{ji}z_i\right| > t \right] \le 2\exp\left(-\frac{t^2}{4\sum_{i \in A_e} w_{ji}^2} \right) \\
    \implies &\textstyle\Pr_z \left[\left|\sum_{i \in A_e} w_{ji}z_i\right| > 2\sqrt{2\log(nsr)\sum_{i \in A_e} w_{ji}^2} \right] \le \frac{1}{(nsr)^2}
    \end{align*}
    Now we use that $\sum_{e=1}^r \sum_{i \in A_e} w_{ji}^2 = \|w_j\|_2^2$.
    \begin{align*}
    \implies &\textstyle\Pr_z \left[\sum_{e=1}^r\left(\sum_{i \in A_e} w_{ji}z_i\right)^2 > 8\log(nsr) \|w_j\|_2^2\right] \le \frac{1}{n^2s^2r} \\
    \implies &\Pr_z\left[\forall j \in [s], \; \|w'_j\|_2^2 \le 8\log(nsr) \|w_j\|_2^2\right] \ge 1-\frac{1}{n^2sr}.
\end{align*}
Combining with the fact that $\|w'_j\|$ is always at most $W \sqrt{n}$, we get that 
$$\E_{z}\left [\max_{j \in [s]} \|w'_j\|_2^2\right ] \leq O(\log ns r) \|w_j\|_2^2.$$
Combining the above with \eqref{eq:astons}, we get
\begin{align*}
    \NS_\rho(h) &= \frac{1}{r} \E_{z, \{A_e\}}[\AS(H_z)] \le \frac{O(W^2 k^4\log^2 (nrs)^2 + k^3 B^2 \sqrt{\log s})}{\sqrt{r}}\\ 
    &= \sqrt{(1-\rho)}O(k^4 W^2 \log^2(ns/(1-\rho)) + k^3 B^2 \sqrt{\log s}).
\end{align*}
 The claim now follows.
\end{proofof}

\end{document}